\newtheorem{theorem}{Theorem}
\newacronym{m-c}{M-C}{Multilabel Classifier or classification} 
\newacronym{b-c}{B-C}{Binary Classifier or classification}
\title{Meta learning with language models: Challenges and opportunities in the classification of imbalanced text}
\author{
 Apostol Vassilev\\
  National Institute of Standards and Technology\\
  Gaithersburg, USA \\
  \texttt{apostol.vassilev@nist.gov} \\
  %% examples of more authors
   \And
 Honglan Jin\\
  National Institute of Standards and Technology\\
  Gaithersburg, USA\\
  \texttt{honglan.jin@nist.gov} \\
  \And
  Munawar Hasan\\
  National Institute of Standards and Technology\\
  Gaithersburg, USA\\
  \texttt{munawar.hasan@nist.gov} \\
}
\begin{document}
\maketitle

\begin{abstract}
Detecting out of policy speech (OOPS) content is important but difficult. While machine learning is a powerful tool to 
tackle this challenging task, it is hard to break the performance ceiling due to factors like 
quantity and quality limitations on training data and inconsistencies in
OOPS definition and data labeling~\cite{fortuna2018survey}.
To realize the full potential of available limited resources, we propose a meta learning technique (MLT)
that combines individual models built with different text representations. 
We analytically show that the resulting technique is numerically stable and produces 
reasonable combining weights. We combine the MLT with a threshold-moving (TM) technique 
to further improve the performance 
of the combined predictor on highly-imbalanced in-distribution and out-of-distribution datasets. We also provide computational results to show the statistically 
significant advantages of the proposed MLT approach.\footnote{DISCLAIMER: This paper is not subject to copyright in the United States. Commercial products are identified in order to adequately specify certain procedures. In no case does such identification imply recommendation or endorsement by the National Institute of Standards and Technology, nor does it imply that the identified products are necessarily the best available for the purpose.}\\[0.1cm]

All authors contributed equally  to this work.

\end{abstract}

% keywords can be removed
\keywords{Natural language processing \and Out of policy speech detection \and Meta learning \and Deep learning \and Large Language Models}

\section{Introduction}
\label{sec:intro}
Out of policy speech (OOPS) has permeated social media with serious consequences for both individuals and society. Although it comprises a small fraction of the content generated daily on social media, sifting through the data to quickly identify and eliminate the toxic content is difficult. The scale of this problem has long passed a threshold that requires automated detection. Yet it remains to be a challenging problem for machine learning (ML) due to the way OOPS manifests itself in datasets: context-dependent, nuanced, non-colloquial language that may even be syntactically incorrect. Because the OOPS content of the dataset is usually only a small fraction of the overall size, there is a high imbalance between OOPS and in-policy text. Related to this, there are not many high-quality labeled datasets with consistent definitions of OOPS and in-policy content. The difficulties are exacerbated further by significant differences in the distributions of the datasets that the model has been trained on and the data it sees during deployment. When faced with all of these challenges, ML models applied to natural language processing (NLP) tasks quickly reach a performance ceiling that limits their usefulness for sensitive tasks, such as OOPS detection.     

A recent comprehensive evaluation of the performance of NLP models on many tasks~\cite{liang2022holistic} has revealed a relationship between the accuracy and the access to the model, showing a consistent gap between the higher accuracy of non-open models and that of open models. This finding suggests troubling power dynamics related to language models.  Attempts to close that gap by improving the weaker language models by finetuning them on outputs from a stronger proprietary model have not been successful \cite{gudibande2023false}. 

We approach this problem differently with a meta-learning technique for combining multiple open NLP models with the goals of breaking through the ceiling of each individual model and attaining higher combined performance on a large number of OOPS detection tasks, at a par with the performance of the best non-open model on these tasks. We make three main contributions: \textbf{i) a meta learning technique (MLT)} with language models with statistically significant performance improvements on high-quality recent benchmark datasets specifically curated for many OOPS detection tasks, \textbf{ii) theoretical analysis of the MLT} that establishes it as computationally stable and results in reasonably valued combiner weights for the participating language models, backed by numerical results in full agreement with the theory, 
and \textbf{iii) combining MLT with threshold-moving (TM)}, further improves the performance on highly-imbalanced in-distribution and out-of-distribution datasets, thus closing the performance gap with the best non-open model. Our results also suggest that smaller models can outperform much larger models on tasks of interest. 

This paper is organized as follows. We review related work to establish a baseline for our research. Then we introduce our approach based on MLT with language models and TM. Next, we provide extensive numerical results that demonstrate the advantages of our approach. We end with conclusions.    		%introduction
\section{Related Work}
\label{sec:related}
The inception of attention mechanisms~\cite{graves2014neural, bahdanau2014neural, luong2015effective} stimulated the development of many  modeling techniques for NLP problems. The transformer network, first introduced by ~\cite{vaswani2017attention}, has since become the dominant technology used by the top performing modern NLP models, including large language models~\cite{devlin2018bert, yang2019xlnet, brown2020language, rae2021scaling}. Despite the enormous progress~\cite{minae2022}, NLP remains a hard target for ML, and the technology is still unable to fully capture text semantics in many specific contexts. 

The task of OOPS detection in particular adds complexities like ethnic slurs, incorrect or shortened words, ambiguous grammar, sarcasm, context-dependent belligerence, and a lack of consensus about the definition of OOPS~\cite{vidgen2019challenges, waseem2017understanding, rossini2020beyond, fortuna2018survey}. 
Moreover, it is difficult to differentiate OOPS from offensive language~\cite{davidson2017automated}.
~\cite{poletto2021resources} points out major issues like strong subjective
interpretations that results in  different subjective perceptions in a variety
of systems, each trained on a different resource.
~\cite{yin2021towards} discusses the low generalizability of OOPS models.
These difficulties are further exacerbated by the low-quality datasets available to researchers for OOPS detection. For example, \cite{davidson2019racial} found that racial bias is present in most of the available datasets~\cite{waseem2016you, davidson2017automated, golbeck2017large, founta2018large}. Lack of agreement among the annotators and human errors add more complexity to already low-quality and highly imbalanced data. The absence of context awareness is yet another notable pitfall in such datasets.  All of these factors have hindered the research into OOPS detection.

In an attempt to rectify some of these problems, ~\cite{vidgen2021introducing} have created a high-quality dataset specifically curated for multiple OOPS detection tasks. While still highly-imbalanced, this dataset uses distinct primary categories, context-aware annotation, and high-quality regressive annotation. To enable more targeted assess of various model functionalities with a large suite of functional tests, \cite{rottger2020hatecheck} have developed a dataset and used it to asses the performance of open-source language models and some commercial models, revealing critical weaknesses in all of them. There are many other commercial models in use by the industry (e.g.,   Meta's combination of linformer~\cite{wang2020linformer} and reinforcement integrity optimizer, Google Jigsaw’s Perspective Application Programming Interface (API)~\cite{GooglePerspective},  Two Hat’s SiftNinja~\cite{TwoHat}), but their performance has not been independently verified. The work of \cite{rottger2020hatecheck} suggests that some skepticism about such unverified performance claims is warranted.  

The idea of combining models to gain better performance has a long history in machine learning, starting with the linear stacking regression approach of ~\cite{breiman1996stacked} for tabular data. Later \cite{kittler1998combining} extended the idea and presented a systematic approach to combining using Bayesian rules. The recent advancements in deep learning and available modern computational power have led to ensemble techniques~\cite{opitz1999popular}, which deliver good performance in computer vision tasks~\cite{kumar2016ensemble, paul2018predicting, perez2019solo, savelli2020multi}. In contrast, the idea of ensemble techniques in NLP tasks has hardly been explored; see \cite{luong2015effective},~\cite{devlin2018bert} and~\cite{cer2018universal}. One of the reasons for this is that the individual NLP models have distinct encoding, which leads to the creation of fundamentally distinct hyperspaces that in turn may result in disappointing performance. Recently, a promising technique for combining language models has been proposed and applied to sentiment analysis - a binary classification task, on a well-balanced dataset~\cite{ssnet}.

High data imbalance and transfer learning, when the training dataset and the data with which the model is used in deployment have different distributions, pose serious challenges to machine learning. If left unchecked, these challenges can lead to a severe performance degradation with dire consequences in critical applications, such as OOPS detection.  Many techniques for dealing with these challenges have emerged in the literature: oversampling and undersampling~\cite{kotsiantis2006handling}, the weighted class approach~\cite{vidgen2021introducing,rottger2020hatecheck,byrd2019effect}, and threshold-moving (TM)~\cite{brownlee2020gentle, collell2018simple, gharroudi2015ensemble, fan2007study}. Oversampling and undersampling require modifications to the original datasets~\cite{kotsiantis2006handling}, which is either undesirable or impossible in many applications. The analysis in~\cite{byrd2019effect} shows that the effect of class weighting tends to be prominent initially but vanishes as the model training progresses, effectively making the model more skewed towards the majority class. Moreover, the weighted class method presents many challenges computationally. It assigns the class weights before model training  using a class ratio distribution. This heuristic assignment is not guaranteed to produce an optimal result. To search for optimal class weights, the models must be retrained every time a class weight is changed. Repeated model training requires not only significant computing resources but also a long time. TM,  on the other hand, is a method that maximizes a selected performance measure without retraining. Additionally, TM can be trained on a different performance measure of interest easily, as it is a plug-in method that works during the post-training stage. Due to all of these factors, we utilize TM to address the class imbalance problem.  	%related work
\section{Methodology}
\label{sec:methodology}
Imbalanced data distribution is prevalent in many
application domains, including OOPS detection. Metrics like the F1 score are a better measure than accuracy for uneven data distribution~\cite{sun2009classification}.
We choose macro-averaged F1 scores to report our performance because we treat
all classes equally. Additionally, We report accuracy when applicable, as an absolute 
improvement should ideally be made on both F1 and accuracy, regardless of data distribution. For brevity, in this paper, we will refer to macro-averaged F1 as "\textit{macroF1}".

In multi-label classification, the F1 score for class c is calculated as follows:

$$F_c = \frac{2 * (precision_c * recall_c)}  {(precision_c + recall_c)} = \frac{2TP_c} {(2TP_c + FP_c + FN_c)}$$

where $TP_c$, $FPl_c$, $FN_c$ represents the counts of true positives, false positives, and false negatives for class c, respectively.

Then, the macro-averaged F1 score is obtained by calculating the average of the F1 scores of each class.

Throughout this paper, we will denote a multi-label classifier or classification as~\acrshort{m-c}, and 
a binary classifier or classification as~\acrshort{b-c}.

First, we introduce an MLT architecture to combine predictions from
multiple models. We assume that the number of texts in a corpus $\mathbb{D}$ is large. Let there be $n_c$ classes of texts in $\mathbb{D}$, labeled $1, …, n_c$, for some integer $n_c > 2$. Here, we define a predictor $\text{F}$ in terms of a neural network with dense layers and a sigmoid. The neural network computes the weights $\{w_i\}_{i=1}^K$ for combining the participating models $M_1,\;  ...,\; M_K$. Let $\mathbf{y}_i(\tau^j)$ be the probability estimate computed by the $i$-th model on the $j$-th text in $\mathbb{D}$.

We define the MLT predictor  as a combination of $K>1$ individual predictors:
\begin{align} 
	\label{eq:combinedpredictor}
  	\mathbf{y}(\tau) &= \sum_{i=1}^K w_i\mathbf{y}_i(\tau),\; \forall\tau\in\mathbb{D},\\
 	\text{where either}\; w_i &\geq 0,\; \forall i;\;\;\; \text{or} \;w_i \leq 0,\; \forall i.&\label{eq:nonnegative}
\end{align}
 The real-valued functions $\mathbf{y}_i$  with a range of interval $[0,\; 1]$ have values that correspond to the probability of being assigned to a class $I^{(C)}$ and are assigned to it with respect to the class threshold test with some value $t\in (0,\; 1)$:

\begin{equation}
  \tau \in
  \begin{cases} \mathbf{I}^{(C)} & \text{, if $\sigma(\mathbf{y}(\tau)) \geq t$,}\\
    \mathbf{I}^{(\bar{C})} & \text{, otherwise.}
  \end{cases}
  \label{eq:threshold}
\end{equation}
Here $\sigma(x)$ is the sigmoid function.
Because $\mathbf{y}_i(\tau)$ is, with ranges, shifted with respect to the domain of the sigmoid, to get accurate classification one needs to shift the range of $\mathbf{y}$,
\begin{equation}
  \sigma_b(\mathbf{y}(\tau)) = \sigma(\mathbf{y}(\tau) + b),
  \label{eq:biasedsigma}
\end{equation}
for some $b>0$ if $w_i\leq 0, \forall i$ or $b<0$ if $w_i\geq 0, \forall i$.  

Secondly, we employ the threshold-moving (TM) technique to post-process predictions from MLT, aiming to alleviate data imbalance issues \cite{kotsiantis2006handling, collell2018simple}. We refer to this combination as MLT-plus-TM.
TM involves utilizing a trained threshold value to map probabilities to class labels, thereby optimizing selected performance metrics such as the \textit{macroF1} score of the model.

Subsequently, we construct classification systems, denoted as MLT-plus-TM, for both multi-label classifiers (~\acrshort{m-c}) and binary classifiers (~\acrshort{b-c}). Both systems utilize Gold labeled data from Contextual Abuse Dataset (CAD)~\cite{vidgen2021introducing}.
% CAD is a high-quality labeled dataset with a granular level with great coverage of both `Abusive' and `Non-abusive' sub-categories, cf., Figure~\ref{fig:class_hierarchy}.
% We use five sub-categories in our~\acrshort{m-c} except `Non-hateful Slurs'.
% `Neutral' entries dominate, accounting for $79.8$ \%
% of the data. `Identity-directed Abuse' accounts for $9.9$ \%, `Affiliation-directed Abuse'
% $(5.0 \%)$, `Person-directed Abuse' $(4.0 \%)$ and `Counter Speech' $(0.8 \%)$.
% `Non-hateful Slurs' accounts for only $0.5$ \% and is removed from consideration~\cite{vidgen2021introducing}.

%  Note that the top level entries of `Abusive' and `Non-abusive'
% are mapped to `Hateful' and `Non-hateful' classes. The ratio between the two classes is about $20$ \% to $80$ \%.
% Note that the data distribution among categories is highly imbalanced in ~\acrshort{m-c} and ~\acrshort{b-c}.

For both classification tasks, we begin by constructing five individual models using five
different embeddings. Subsequently, we combine the results using the approach outlined in \eqref{eq:combinedpredictor}-\eqref{eq:biasedsigma}, followed by applying the threshold-moving (TM) technique in both~\acrshort{m-c} and~\acrshort{b-c}.

To establish a notion about the reasonableness of the combiner weight values in \eqref{eq:combinedpredictor}, we show that the following theorem holds.
\begin{theorem}\label{thm1}
Let $W=\sum_{i=1}^K w_i$ with $w_i$ computed by the procedure \eqref{eq:combinedpredictor}-\eqref{eq:biasedsigma}. Then for any class $C\in\mathbb{D}$, $W$ is either close to 1 or -1. 
\end{theorem}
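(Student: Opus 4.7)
The plan is to reduce the theorem to a one-dimensional analysis of the biased sigmoid head on top of the combined predictor \eqref{eq:combinedpredictor}, by evaluating it at the two extreme inputs corresponding to clear class-$C$ and class-$\bar C$ samples. First, I would substitute these two typical feature vectors. On a sample $\tau$ for which every individual model is confident that $\tau \in \mathbf{I}^{(C)}$, each $\mathbf{y}_i(\tau) \approx 1$, so $\mathbf{y}(\tau) \approx \sum_{i=1}^K w_i = W$. On a sample for which every $\mathbf{y}_i(\tau) \approx 0$, we get $\mathbf{y}(\tau) \approx 0$. The meta-learner's task then reduces to separating two scalar ``cluster means'' $\{W,0\}$ using the rule \eqref{eq:threshold} with the biased sigmoid \eqref{eq:biasedsigma}.

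Next, I would enforce correct classification at these two cluster means: $\sigma(W+b) \geq t$ and $\sigma(b) < t$ in the $w_i \geq 0$ regime, and the mirrored pair in the $w_i \leq 0$ regime. Monotonicity of $\sigma$ together with the sign constraints in \eqref{eq:nonnegative}--\eqref{eq:biasedsigma} pins the sign of $W$ to the chosen regime (so $W>0$ is paired with $b<0$, and $W<0$ with $b>0$). Matching the two saturation targets of the dense-plus-sigmoid training head at the cluster means yields the self-consistency $b \approx -W/2$ (respectively $|W|/2$ in the flipped regime), halving the problem to fixing the single number $|W|$.

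The main obstacle is arguing that $|W|$ settles near $1$ rather than at a much larger scale that would also satisfy the separation inequalities. My plan is to close this by a fixed-point analysis of the training head: once $|W|$ pushes past the characteristic scale of the sigmoid, $\sigma$ saturates on both cluster means, the gradient of the cross-entropy training loss with respect to $W$ collapses, and gradient descent halts in the regime where the biased sigmoid is still responsive -- which, for the paired $b \approx -W/2$, is precisely the unit-scale regime. Combined with the sign pinning from the previous step, this gives $W \approx 1$ or $W \approx -1$ as claimed, and I would expect the numerical experiments reported later in the paper to corroborate the predicted $|W|\approx 1$ scale.
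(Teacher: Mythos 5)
There is a genuine gap at exactly the point you flag as the ``main obstacle,'' and the mechanism you propose to close it does not work. Your separation constraints $\sigma(W+b)\geq t$ and $\sigma(b)<t$ (and their mirrored versions) only pin the \emph{sign} of $W$: for any $b$ with $\sigma(b)<t$, every $W\geq \ln\frac{t}{1-t}-b$ satisfies both inequalities, so $W=10$ with $b=-5$ is as admissible as $W=1$ with $b=-1/2$. The fixed-point/saturation argument you then invoke is a heuristic about training dynamics, not a proof, and it is quantitatively off in two ways. First, with your own pairing $b\approx -W/2$ the two cluster means sit at sigmoid arguments $\pm W/2$, so ``edge of saturation'' would put $W$ at several units, not at $1$. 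Second, for cross-entropy on separable logits the gradient never vanishes and the weight norm is known to grow without bound (logarithmically in training time), so ``gradient descent halts in the unit-scale regime'' is not something you can assert; any finite value it reaches depends on initialization, learning rate, and stopping time, none of which appear in the theorem. Your reduction to two idealized cluster means $\{W,0\}$ also discards the actual distribution of the $\mathbf{y}_i(\tau)$, which is where the content of the statement lives.

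The paper's proof avoids training dynamics entirely and is an a posteriori consistency argument in a thresholded norm $\|\cdot\|_t$ built from the class-assignment indicator. Its two key devices, both absent from your proposal, are (i) the interpolation predictor $\hat{\mathbf{y}}=\mathbf{y}/|W|$, whose coefficients sum to $\pm 1$ and which is assumed (like the individual predictors and the combined predictor) to approximate the true label function $\mathbf{u}$ well, and (ii) a linear surrogate $\mathbb{L}_{t,b}$ of the biased sigmoid that agrees with it exactly under the threshold map, which lets the scalar $W$ be pulled out of the nonlinearity. Writing $\mathbf{u}-\sigma_b(\mathbf{y})=(1-W)\mathbf{u}+W\bigl(\mathbf{u}-\tfrac{1}{W}\sigma_b(\mathbf{y})\bigr)$ and applying the triangle inequality in the four sign cases sandwiches $W$ between the ratios in \eqref{eq:leftestimateLocal}--\eqref{eq:righttnegativeWLocal}, and these ratios are close to $\pm 1$ precisely because the error terms $\|\mathbf{u}-\sigma_b(\mathbf{y})\|_t$ and $\|\mathbf{u}-\sigma_b(\hat{\mathbf{y}})\|_t$ are small relative to $\|\mathbf{u}\|_t$. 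If you want to salvage your route, you would need to replace the saturation heuristic with an assumption of this kind -- that the combiner and its normalization both classify well -- at which point you are essentially reconstructing the paper's argument.
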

\begin{proof}\footnote{ A detailed proof is provided in Appendix~\ref{appendix}.}

For a real-valued function  $\mathbf{y}_i(\tau)$ let us define
\begin{equation}
  ||\mathbf{y}(\tau)||_t^2 =  \sum_{\tau \in \mathbb{D}} \mathbf{I}^{(C)}_t(\mathbf{y}(\tau))^2,
    \label{eq:classl2norm}
\end{equation} where $\mathbf{I}^{(C)}_t$ is the assigned class for $\mathbf{y}(\tau)$ with respect to the threshold $t$ according to \eqref{eq:threshold}. 

 Let $\mathbf{u}$ be defined on the texts in the corpus $\mathbb{D}$, so that $ \mathbf{I^{(C)}_t}(\mathbf{u}(\tau))$ is equal to the label assigned to each text $\tau$ in $\mathbb{D}$ that also belong to class $C$. In other words, $\mathbf{u}$ is the true label on each class.
Similarly, we define
\begin{equation}
  ||\mathbf{y}(\tau) - \mathbf{z}(\tau)||_t^2 =  \sum_{\tau \in \mathbb{D}} (\mathbf{I^{(C)}_t}(\mathbf{y}(\tau)) - \mathbf{I^{(C)}_t}(\mathbf{z}(\tau)))^2.
\label{eq:classdiffl2norm}
\end{equation}
We assume that each predictor $\mathbf{y}_i$ is decent, i.e., $|| \mathbf{u} -\mathbf{y}_i||_t$ is small, compared to $||\mathbf{u}||_t$. Let
$$
%\begin{equation}
  %\label{eq:interpolationPred}
  \hat{\mathbf{y}}(\tau) = \frac{1}{|W|}\mathbf{y}(\tau)= \sum_{i=1}^K \frac{w_i}{|W|}\mathbf{y}_i(\tau)
  %\end{equation}
  $$
be the interpolation predictor constructed as a linear combination of $\mathbf{y}_i$ with coefficients that sum up to $\pm 1$. If the individual predictors are good then the interpolation predictor $\hat{\mathbf{y}}$ is also good, i.e., $||\mathbf{u} - \sigma_b(\hat{\mathbf{y}})||_t$ is small relative to $||\mathbf{u}||_t$.

Let $\mathbb{L}_t(x)$ be a linear approximation of $\sigma(x)$ for some constant $t>0$, such that  $\mathbb{L}_t(x)$ minimizes $||\mathbb{L}_t(x) - \sigma(x)||_t$. Note that any straight line passing through the point $(\ln(\frac{t}{1-t}),\, t)$ and having the same slope as $\sigma^\prime(\ln(\frac{t}{1-t}))$ satisfies  $||\mathbb{L}_t(x) - \sigma(x)||_t=0$. Take
$$||\mathbf{u}-\sigma_b(\mathbf{y})||_t = ||\mathbf{u}-W\mathbf{u} + W\mathbf{u} - \sigma_b(\mathbf{y})||_t$$ and consider the case $b< 0$ and $w_i\geq 0$ for $\forall i$ in \eqref{eq:biasedsigma}. Considering the two cases for $W$ ($W\leq 1$ and $W>1$), applying the triangle inequality and using the interpolation predictor $\hat{\mathbf{y}}(\tau)$ and the linear approximation $\mathbb{L}_t(x)$, one gets
\begin{equation}
\label{eq:leftestimateLocal}
  W \geq\frac{ ||\mathbf{u}||_t - ||\mathbf{u}-\sigma_b(\mathbf{y})||_t}{||\mathbf{u}||_t + ||\mathbf{u} - \sigma_b(\hat{\mathbf{y}})||_t }.
\end{equation} and 
\begin{equation}
  W \leq \frac{ ||\mathbf{u}||_t + ||\mathbf{u}-\sigma_b(\mathbf{y})||_t}{||\mathbf{u}||_t -  ||\mathbf{u} - \sigma_b(\hat{\mathbf{y}})||_t}.
\label{eq:rightestimateLocal}
\end{equation}
Next, consider the case $b> 0$ and $w_i\leq 0$ for  $\forall i$ in \eqref{eq:biasedsigma}. As before, we consider the two cases for $w$ ($1 + W>0$ and $W <-1$). Again, with the help of the triangle inequality, the interpolation predictor $\hat{\mathbf{y}}(\tau)$, and the linear approximation  $\mathbb{L}_t(x)$, one gets
\begin{equation}
W\leq -\frac{||\mathbf{u}||_t - ||\mathbf{u}-\sigma_b(\mathbf{y})||_t}{||\mathbf{u}||_t+||\mathbf{u} - \sigma_{b}(\mathbf{\hat y})||_t}
\label{eq:leftnegativeWLocal}
\end{equation}
and
\begin{equation}
W\geq -\frac{||\mathbf{u}||_t + ||\mathbf{u}-\sigma_b(\mathbf{y})||_t}{||\mathbf{u}||_t-||\mathbf{u} - \sigma_{b}(\mathbf{\hat y})||_t}
\label{eq:righttnegativeWLocal}
\end{equation}
Combining  \eqref{eq:leftestimateLocal}, \eqref{eq:rightestimateLocal}, \eqref{eq:leftnegativeWLocal}, and \eqref{eq:righttnegativeWLocal} completes the proof. 
\end{proof}

  The result of Theorem~\ref{thm1} shows that MLT tends to balance the contributions of the individual models to the joint prediction, 
  %cf., Figure~\ref{fig:weight-bias}, 
  which is important for obtaining better combined performance than that of the individual models. We illustrate this with computational examples in Section~\ref{sec:results}. 

% \begin{figure}[!t]
%     \centering
%     \caption{\textbf{Combiner weights and bias:} The figure below shows the updates of weights and bias of all the five models for each labels.}
%     \label{fig:weight-bias}
%     {
%     \includegraphics[width=0.95\textwidth]{} 
%     } 
% \end{figure}

% CAD MC dataset
\begin{figure}[!t]
    \centering
    \caption{\textbf{Dataset for M-C: } In the figure below, we see the CAD {\fontfamily{qcr}\selectfont train}, {\fontfamily{qcr}\selectfont dev}  and {\fontfamily{qcr}\selectfont test} splits with number of samples for each label. It is clear from the figure that the majority of the samples are neutral.}
    \label{fig:cad_mc_dataset}
    {
    \includegraphics[width=0.85\textwidth]{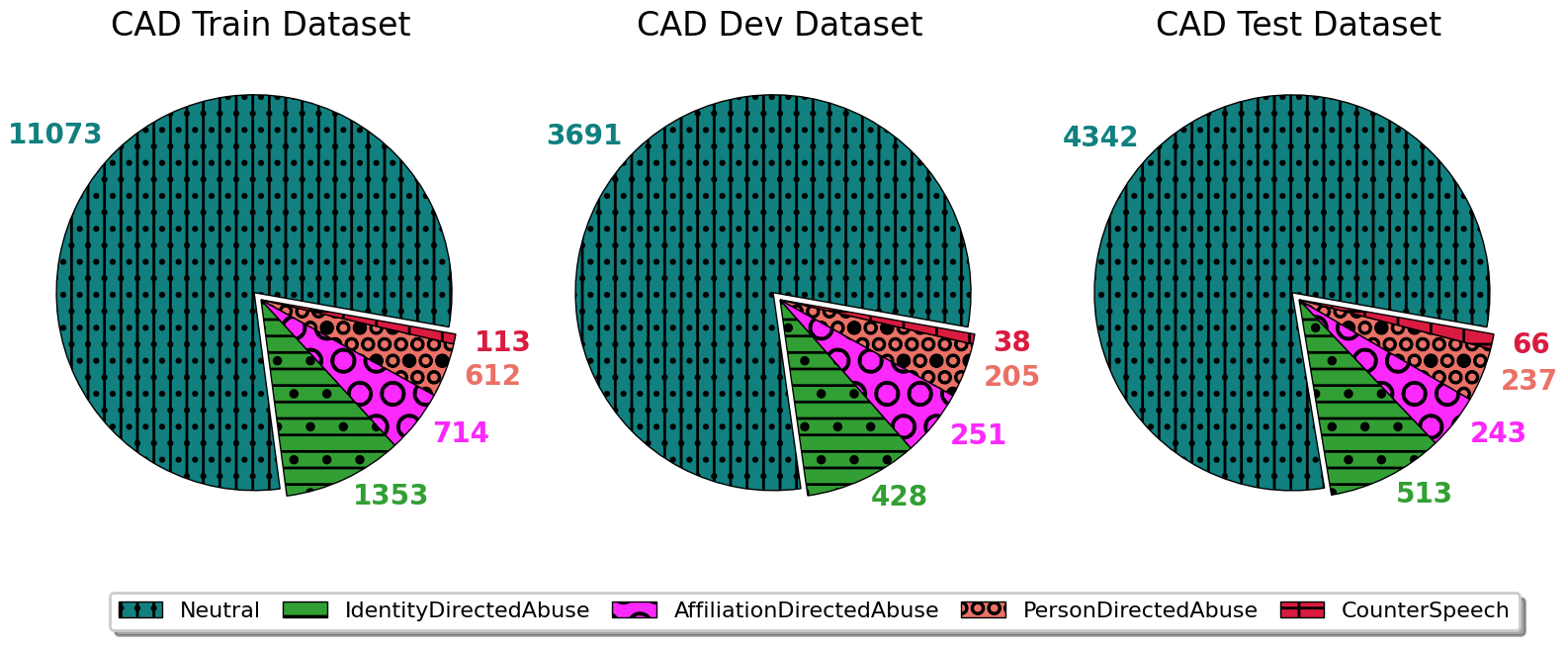} 
    } 
\end{figure}

%CAD and HateCheck BC dataset
\begin{figure}[!t]
    \centering
    \caption{\textbf{Dataset for B-C: } In the figure below, we see the CAD {\fontfamily{qcr}\selectfont train}, {\fontfamily{qcr}\selectfont dev}  and {\fontfamily{qcr}\selectfont test} dataset with number of samples for "Non-hateful" and "Hateful" category. The ratio between "Non-hateful" to "Hateful" category is approximately $80\,\%$ to $20\,\%$. For the dataset~\cite{rottger2020hatecheck}, the ratio is approximately $32\,\%$ to $68\,\%$.}
    \label{fig:cad_hc_bc_dataset}
    {
    \includegraphics[width=1.\textwidth]{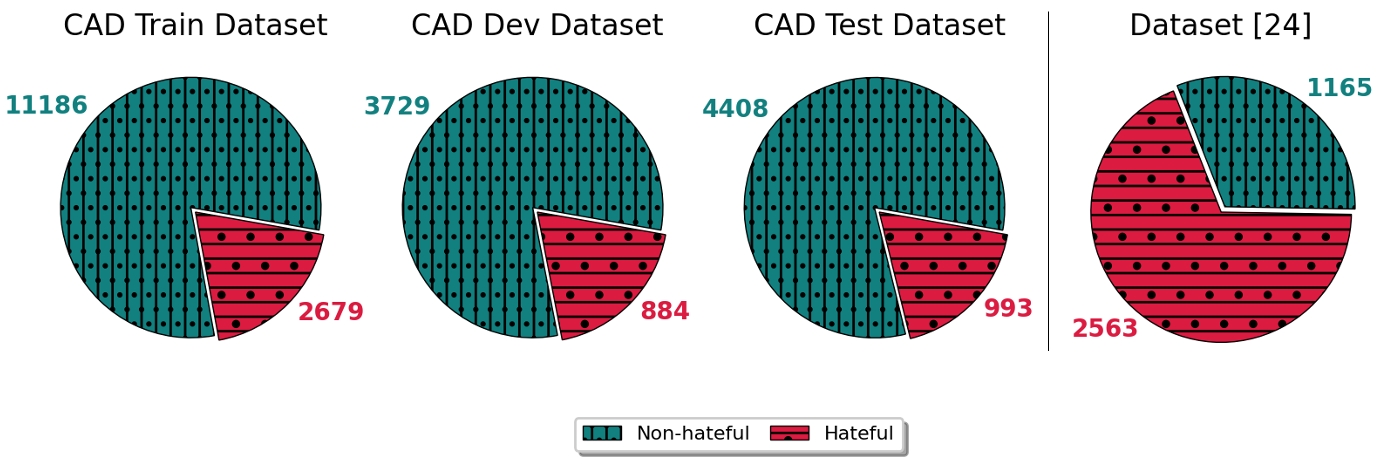} 
    } 
\end{figure}

Finally, we assess the performance of MLT-plus-TM, trained with CAD data, on a distinct high-quality human-labeled dataset~\cite{rottger2020hatecheck}. 

The dataset~\cite{rottger2020hatecheck} encompasses an extensive array of functional tests within both the "Hateful" and "Non-hateful" categories. We have confirmed with the authors of CAD~\cite{vidgen2021introducing} and \cite{rottger2020hatecheck} that these two datasets were independently constructed without any overlap between them. However, it's noteworthy that both datasets adhere to the same well-defined OOPS taxonomy, rendering them suitable for rigorous assessment of our binary classifier (~\acrshort{b-c}) through transfer learning.

Significantly, the two datasets exhibit distinctly different class distributions. In CAD, the ratio between "Hateful" and "Non-hateful" instances stands at approximately $20\,\%$ to $80\,\%$. This distribution aligns more closely with real-world scenarios where "Non-hateful" content predominates. In contrast, the distribution in \cite{rottger2020hatecheck} is skewed in the opposite direction, with a ratio of $68.8\,\%$ "Non-hateful" to $31.2\,\%$ "Hateful". It's important to note that the dataset~\cite{rottger2020hatecheck} is designed to uncover model weaknesses, intentionally featuring a considerably larger proportion of challenging cases than they occur naturally. Consequently, this skewed distribution in \cite{rottger2020hatecheck} is not reflective of a real-world class distribution. In situations where such distribution discrepancies exist between training and test data, the threshold-moving (TM) technique might shift its "favor" in a direction contrary to its intended effect. We offer illustrative examples and delve into this topic further in Section~\ref{sec:pbi}.

% % CAD MC dataset
% \begin{figure}[!t]
%     \centering
%     \caption{\textbf{Dataset for M-C: } In the figure below, we see the CAD {\fontfamily{qcr}\selectfont train}, {\fontfamily{qcr}\selectfont dev}  and {\fontfamily{qcr}\selectfont test} splits with number of samples for each label. It is clear from the figure that the majority of the samples are neutral.}
%     \label{fig:cad_mc_dataset}
%     {
%     \includegraphics[width=0.85\textwidth]{} 
%     } 
% \end{figure}

% %CAD and HateCheck BC dataset
% \begin{figure}[!h]
%     \centering
%     \caption{\textbf{Dataset for B-C: } In the figure below, we see the CAD {\fontfamily{qcr}\selectfont train}, {\fontfamily{qcr}\selectfont dev}  and {\fontfamily{qcr}\selectfont test} dataset with number of samples for `Non-Hateful' and `Hateful' category. The ratio between `Non-Hateful' to `Hateful' category is approximately $80\,\%$ to $20\,\%$. For the dataset~\cite{rottger2020hatecheck}, the ratio is approximately $32\,\%$ to $68\,\%$.}
%     \label{fig:cad_hc_bc_dataset}
%     {
%     \includegraphics[width=1.\textwidth]{} 
%     } 
% \end{figure}

\subsection{MLT architecture} \label{sec:combiner}
The MLT combiner architecture consists of the following components: (i) individual base models characterized by the embeddings they use, and ii) the method for combining the models.  
\begin{enumerate}
\def\labelenumi{\arabic{enumi}.}
\item
  Individual base models
  \begin{itemize}
      \item \textbf{BERT}~\cite{devlin2018bert}\textbf{:} It is one of the state-of-the-art model for solving various tasks in the area of natural language processing. We use an instance from~\cite{BERTEncoder} pre-trained on real-world text from Wikipedia (en.wikipedia.org) and BooksCorpus~\cite{Wolfram:BookCorpus}.
      \item \textbf{BERTweet}~\cite{qudar2020tweetbert}\textbf{:} Based on BERT, it is the first public large-scale pre-trained language model for English Tweets. It was pre-trained on $850$M English Tweets (cased).
      \item \textbf{Bigbird}~\cite{zaheer2020big}\textbf{:} It is a sparse-attention-based transformer model that extends BERT to much longer sequences. 
      \item \textbf{Bloom}~\cite{scao2022bloom}\textbf{:} It emerged as world's largest open multi-lingual language model. Bloom has $176$B parameters and is able to generate text in $46$ languages.
      \item \textbf{XLNet}~\cite{yang2019xlnet}\textbf{:} It emerged as an extension to Transformer-XL~\cite{dai2019transformer}, since it borrows the recurrence mechanism from Transformer-XL, to establish long-term dependencies and, hence, XLNet is a popular choice in the area of long-range NLP tasks.
  \end{itemize}
\end{enumerate}

\begin{quote}

All five individual models were created using the AutoModelForSequenceClassification from the Hugging Face Transformers library~\cite{wolf2019huggingface}. This is a generic model class designed for loading pre-trained transformer models. It additionally incorporates a classification head on top of the model outputs, specifically utilized for tasks involving classification. The same CAD 
{\fontfamily{qcr}\selectfont train} data were used as training data; however, they are utilized at distinct level of granularity for~\acrshort{m-c} and~\acrshort{b-c}.
\end{quote}

\begin{enumerate}
\def\labelenumi{\arabic{enumi}.}
\setcounter{enumi}{1}
\item
  Combiner Method
\end{enumerate}
\label{subsec:combiner-method}

\begin{quote}
We subsequently construct~\acrshort{m-c} 
and~\acrshort{b-c} MLT combiners by amalgamating five individual models.
In both classifications, the predicted probabilities on
CAD {\fontfamily{qcr}\selectfont dev} data from the individual models are used as input
within a neural network architecture. The hidden layer has only one node responsible for
combining the input values. The resulting output entails
a consolidated, singular predicted probability value, post application of sigmoid activation, in accordance with equation~\eqref{eq:combinedpredictor}-\eqref{eq:biasedsigma}.

In~\acrshort{b-c} MLT, only a single combiner on "Hateful" class is necessary. The outcome corresponds to the merged probability of the "Hateful" class.

Multi-label classification algorithms can be categorized
into two different groups~\cite{Trohidis2011-lz}: (i) problem transformation
methods, and (ii) algorithm adaptation methods. 
Binary relevance (BR)~\cite{Tsoumakas2010MiningMD,Zhang2018-lh} is a popular problem transformation approach that trains binary classifiers, one for each class in the datasets. One vs. the rest (OVR) is a typical type of the BR approach. BR has faced criticism due to its assumption of label independence within a multi-label dataset, disregarding potential correlations
among labels. Nevertheless, despite this constraint, the BR method remains a straightforward and efficient strategy for
tackling multi-label classification challenges, alongside the methods in the second group that directly handle multi-label data.

Within the CAD dataset~\cite{vidgen2021introducing}, only $1.94\,\%$ of entries in the training
set contain more than one primary category. The training data have a label cardinality
of $1.02$. Unlike many other multi-label classifications where entries with multiple categories constitute a significant portion of the data, the label correlations within CAD are not dominantly strong. 

In~\acrshort{m-c} MLT, we extend the aforementioned classification task approaches to construct our MLT combiners.
We explored both strategies in~\acrshort{m-c} MLT: a BR-like method where we decompose the $N$ columns and create separate combiners for each column, and a neural network approach to build a single combiner that learns a direct mapping from probabilities across all $N$ columns to the output of combined probabilities for all $N$ columns in simultaneously.

Throughout our experiments, both methods yielded comparable performance. Despite the fact that the second method captures label correlations, the BR method, which assumes label independence, performs comparatively well due to the label structure of the CAD dataset. We choose the BR method as our final choice in MLT due to its simplicity, efficiency, and its seamless integration with training threshold cut-off values later on – a method also based on BR principles.

In the BR method, we employ an \textit{$N$-separate combiners
approach}: we decompose the $N$ columns and construct an individual combiner for each of them.
Every combiner merges the predicted probabilities from the five base models for a particular column into a unified result.
This process is iterated for all $N$ = $5$ columns within the CAD dataset, yielding the following combiners: "Neutral" combiner, "Identity-directed abusive" combiner,
"Affiliation-directed abuse" combiner, "Person-directed abuse" combiner, and "Counter Speech" combiner. 
The composition of verdicts from all binary combiners constitutes the multi-label output.

\end{quote}

\begin{figure}[!t]
    \centering
    \caption{\textbf{Class hierarchy} (Adapted from~\cite{vidgen2019challenges})}
    \label{fig:class_hierarchy}
    {
    \includegraphics[width=0.40\textwidth]{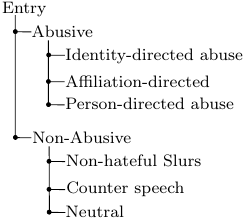}
    } 
\end{figure}

\subsection{Data Composition}
\label{sec:data_composition}
CAD is a high-quality labeled dataset with a granular level and with a great coverage of both "Abusive" and "Non-abusive" sub-categories, cf., Figure~\ref{fig:class_hierarchy}.
We use five sub-categories in our~\acrshort{m-c}, i.e., we exclude "Non-hateful Slurs".
"Neutral" entries dominate, accounting for $79.8\,\%$
of the data. "Identity-directed Abuse" accounts for $9.9\,\%$, "Affiliation-directed Abuse"
($5.0\,\%$), "Person-directed Abuse" ($4.0\,\%$) and "Counter Speech" ($0.8\,\%$).
"Non-hateful Slurs" accounts for only $0.5\,\%$ and is removed from consideration~\cite{vidgen2021introducing}. The distribution of the five sub-categories that we use from CAD dataset in M-C, for each of the {\fontfamily{qcr}\selectfont train}, {\fontfamily{qcr}\selectfont dev} and {\fontfamily{qcr}\selectfont test} splits is shown in Figure~\ref{fig:cad_mc_dataset}.

In the case of B-C, for the CAD dataset, the top-level entries of "Abusive" and "Non-abusive" are mapped to "Hateful" and "Non-hateful" classes, i.e., for each of the five sub-categories, we use "Neutral", and "Counter Speech" as the "Non-hateful" category and "Identity-directed Abuse", "Affiliation-directed Abuse" and "Person-directed Abuse" as the "Hateful" category, respectively (cf., Figure~\ref{fig:cad_hc_bc_dataset}). After the mapping, the ratio between the two classes is about $20\,\%$ to $80\,\%$ for all the three {\fontfamily{qcr}\selectfont train}, {\fontfamily{qcr}\selectfont dev} and {\fontfamily{qcr}\selectfont test} splits.

CAD is used in all aspects of training for both ~\acrshort{m-c} and ~\acrshort{b-c} including individual models, MLT combining weights, and threshold cut-off values for TM.

The independent dataset~\cite{rottger2020hatecheck} is used to test above trained systems.
The ratio between "Hateful" to "Non-hateful" categories in the dataset~\cite{rottger2020hatecheck} is around $68\,\%$ to $32\,\%$ (cf., Figure~\ref{fig:cad_hc_bc_dataset}). Unlike in CAD where most of samples are "Non-hateful" as in real-world situation, the dataset~\cite{rottger2020hatecheck} has a much larger portion of "Hateful" class, but it is deliberately crafted that way for testing system performance purpose. 

Both CAD and \cite{rottger2020hatecheck} are high quality and share consistent OOPS interpretation.

\subsection{Threshold-moving (TM)} \label{sec:tm}
%repeated from previous section
%TM is a technique for finding an optimal threshold for a class that is used to map predicted probabilities to a category label
%so that a selected performance metric is optimized (for eg., F1 score).
Models trained to minimize  the overall loss function with imbalanced data tend to exhibit bias in favor of the majority class,
using the default $0.5$ threshold to map probabilities to category labels would more likely assign minority class data to the majority class. Threshold-moving (TM) is utilized to counter this bias by enhancing the likelihood of correctly classifying minority data. An optimal threshold value is trained by optimizing F1 on the positive class (minority class) "Hateful" in~\acrshort{b-c} after MLT.

In the case of~\acrshort{m-c}, employing the default threshold of 0.5 would assign the predicted class as the one with the highest probability equal to or greater than 0.5. Various TM strategies are available. The simplest approach is training a single global threshold value for all classes to achieve the best performance, known as the score-cut strategy (S-Cut). The default threshold value of 0.5 is a special instance of S-Cut. S-Cut returns a label set where scores exceed or equal the trained threshold value. The rank-cut strategy (R-cut) aims to return a set of precisely n labels with the highest scores. However, in our application, most samples possess only one class, with only a few having more than one. Thus, R-cut is not a suitable candidate. A more complex strategy, called CS-Cut, exists. The class-specific score-cut (CS-Cut)~\cite{6577846} optimizes a vector of thresholds, each corresponding to a label. It then employs the same thresholding operation as S-Cut. CS-cut proves useful when certain labels are harder to classify than others, as is the case in this application. Less dominant classes are more challenging to classify accurately. Hence, we adopt this method, as depicted in Algorithm~\ref{algo:tm}. CS-cut can provide higher accuracy but is more susceptible to overfitting compared to S-Cut.

Within CS-Cut, multiple variants exist, ranging from simple to more intricate approaches. In CS-Cut for~\acrshort{m-c} thresholding, we adopt a binary relevance (BR)-like method. BR is among the most common thresholding methods in multi-label classification~\cite{6471714}. It's both simple and highly efficient, serving as a strong baseline for comparison with more advanced methods. Given our use of BR in MLT, training threshold values and applying them through TM on MLT output are streamlined and straightforward.
 
As elaborated in Section~\ref{subsec:combiner-method} for~\acrshort{m-c}, five MLT combiners generate predicted probabilities for five classes. Post-MLT, we train a threshold value for TM using the MLT output for each class, similar to~\acrshort{b-c}. We repeat this process for all classes. The resultant final threshold values for all classes constitute a tuple of five values ranging between 0 and 1. This TM approach assumes no class dependency in~\acrshort{m-c} regarding the effect of a threshold value in one class on those from other classes. Therefore, extending TM training from~\acrshort{b-c} to~\acrshort{m-c} is straightforward. This assumption of class independence greatly simplifies the task of identifying suitable threshold values for~\acrshort{m-c}. The macro-averaged F1 score for~\acrshort{m-c} is the average of F1 scores calculated in~\acrshort{b-c} for each class.

TM alters the decision-making dynamics between the minority and majority classes and impacts evaluation results. Depending on the most important for the application, finding the optimal threshold values becomes necessary to maximize performance with respect to the chosen metric, or to satisfy lower bound constraints on other metrics while optimizing the primary one.
 
In our experiments, TM improves the \textit{macroF1} score in both~\acrshort{m-c} ($0.439$ to $0.467$) 
and~\acrshort{b-c} ($0.714$ to $0.722$) on CAD {\fontfamily{qcr}\selectfont test} data, at the expense of a slight accuracy decline. This aligns with expectations, as further discussed in Section~\ref{sec:combinerPerformance}. We also applied TM trained on CAD to a different dataset~\cite{rottger2020hatecheck} in ~\acrshort{b-c}. 
Through this transfer learning, we achieve performance improvements in both \textit{macroF1} ($0.688$ to $0.69$) and accuracy ($0.717$ to $0.741$). More extensive details and analysis are provided in Section~\ref{sec:combinerPerformance}.

\begin{algorithm}[!t] 
    \footnotesize
    \caption{Optimal threshold training in TM}
    \label{algo:tm} 
    \textbf{Input: }classes, step size, base and max threshold
    \begin{algorithmic}[1] 
        \State $\mathbf{I}^{(C)} \leftarrow $ number of classes \textsl{\Comment{Initialization}}
        \State $\delta \leftarrow \text{step size}$, $\alpha \leftarrow \text{base threshold}$, $\beta \leftarrow \text{max threshold}$ 
        \State $\mathbb{D} \leftarrow$ predicted probabilities from individual model
        \State execute Meta Learning Technique on $\mathbb{D}$ \textsl{\Comment{MLT}}
        \State $\mathcal{O}pt\mathcal{T}h \leftarrow \phi$ \textsl{\Comment{Stores optimal threshold for each class}}
        \For{$i$ $=$ $1$ to $\mathbf{I}^{(C)}$}
            \State $\textsl{DICT}^{F_1}$ $\leftarrow \phi$
            \State $\mathbb{D}'[i] \leftarrow \text{build data from $\mathbb{D}$ corresponding to}$ $i^{th}$ class
            \For{$\Delta$ $=$ $\alpha$ to $\beta$}  \textsl{\Comment{TM}}
                \State $\gamma \leftarrow$ $F_1$ score for $\mathbb{D}'[i]$ using $\Delta$ as threshold
                \State $\textsl{DICT}^{F_1}[\Delta] \leftarrow \gamma$
                \State $\Delta \leftarrow \Delta + \delta$
            \EndFor
            \State $\mathcal{O}pt\mathcal{T}h[i]$ $\leftarrow$ $key(max(\textsl{DICT}^{F1}))$
        \EndFor
        \State \textbf{Return} $(\mathcal{O}pt\mathcal{T}h)$ \textsl{\Comment{Return optimal threshold}}
    \end{algorithmic} 
\end{algorithm}	%methodology

\section{Computational Results}
\label{sec:results}
In this section, we provide computational results with analysis. We use macro-averaged F1 (\textit{macroF1}) score as the main metric in performance evaluation because datasets in the OOPS domain are highly imbalanced, and we want to treat all classes equally, regardless of their presence volume in the datasets. We also provide accuracy in reporting as a reference, but it should be considered a secondary metric.

All results are reported as means with standard deviations from multiple runs ($n$ represents the number of runs in all tables). In case of individual models, two runs from an individual model are used in performance evaluation and in joining combiners. In case of combiners (MLT and MLT-plus-TM runs), if each individual model has two runs,  combining five individual models with all possible permutations create 32 runs.

Three sets of results are presented: ~\acrshort{m-c} (Table ~\ref{tab:cad_mc}) and ~\acrshort{b-c} (Table ~\ref{tab:bc_on_cad}) on CAD test, and ~\acrshort{b-c} (Table ~\ref{tab:bc_on_hatecheck}) on dataset~\cite{rottger2020hatecheck} using transfer learning.
For both ~\acrshort{m-c} and ~\acrshort{b-c} on CAD, five individual models are trained using the CAD training dataset. Subsequently, predictions are obtained on the CAD dev dataset and used to train combining weights through the MLT structure explained in Section~\ref{sec:combiner}. Following the MLT step, threshold cut-off values are trained (detailed in Section~\ref{sec:tm}) using the output from MLT. The cut-off values are then applied through TM on the predictions from the CAD test dataset to obtain the final MLT-plus-TM results.

For ~\acrshort{m-c}, we use the reported performance on the same tasks in ~\cite{ rottger2020hatecheck} as baselines to compare with our results.

In ~\acrshort{b-c} transfer learning, we utilize the trained MLT combining weights and TM cut-off values from CAD to generate MLT and MLT-plus-TM results on dataset~\cite{rottger2020hatecheck}. Finally, we compare our results with the reported ones in ~\cite{rottger2020hatecheck} and provide observations and analysis. 

All experiments reported here were developed in Python 3.8 with the TensorFlow 2.9.1 ~\cite{tensorflow} library and PyTorch Lightning (version 1.7.7). The source code for reproducing the results reported here along with appropriate meta data is available at \url{https://data.nist.gov/od/id/mds2-3074}. 

The individual participating open models, \textbf{BigBird} (M1), \textbf{BERT} (M2), \textbf{BERTweet} (M3), \textbf{Bloom} (M4), and \textbf{XLNet} (M5), were trained on a professional Graphics Processing Unit (GPU) cluster with 8 NVIDIA Tesla V100 (32 GB each).
The experiments for \textbf{MLT} and \textbf{TM} were carried on a 2017 MacBook Pro with 3.1 GHz Quad-Core Intel Core i7 and 16 GB RAM, {\em without} Graphics Processing Unit (GPU) acceleration.

\subsection{Performance of~\acrshort{m-c}}
\label{sec:pmc}

Table~\ref{tab:cad_mc} presents the performance at all category levels. 
Among the five participating individual models, BERTweet demonstrates the best overall performance, achieving the highest accuracy and \textit{macroF1} scores.
MLT surpasses all individual models in accuracy and only slightly lags behind BERTweet in \textit{macroF1} score. MLT-plus-TM further improves the \textit{macroF1} score to the highest level, albeit at the expense of a slight decrease in accuracy.
However, in imbalanced classification, relying solely on accuracy can be misleading as it may not adequately account for minority classes. On the other hand, \textit{macroF1} score is a more meaningful metric in an imbalanced classification settings.
Our MLT-plus-TM outperforms the baseline in both \textit{macroF1} score ($0.467$ vs. $0.455$) and accuracy ($0.780$ vs. $0.769$) and achieves the highest \textit{macroF1} score in all runs.

Table~\ref{tab:cad_mc_per_category} presents the performance at a per-category level.  MLT-plus-TM achieves the highest F1 score in all categories, except for `Counter Speech'. However, it is worth noting that the 
`Counter Speech' category has a very low prevalence ($0.8\,\%$), which makes the performance in this category less reliable and should be interpreted with caution.

\begin{table}[!t]
\footnotesize
\centering
    \caption{\textbf{CAD~\acrshort{m-c} scores on the {\fontfamily{qcr}\selectfont test} set}}
  \label{tab:cad_mc}
  {
  \begin{tabular}{|p{6cm}||p{4cm}|p{4cm}|}
    \hline
    \textbf{Model}(\#) & \textbf{Accuracy} & \textbf{\textit{macroF1}} \\ 
    \hline \hline
    \textbf{DistilBERT Baseline 1 (n=5)} & 0.769 (0.005)  & 0.44 (0.007) \\
    \hline
    \textbf{BERT Baseline 2 (n=5)} & 0.762 (0.005)  & 0.455 (0.006) \\
    \hline
    \textbf{Bigbird (n=2)} & 0.806(0.0076) & 0.418(0.0086) \\
    \hline
    \textbf{BERT (n=2)} & 0.806(0.0041) & 0.424(0.0025) \\
    \hline
    \textbf{BERTweet (n=2)} & 0.806(0.0043) & 0.441(0.0024) \\
    \hline
    \textbf{Bloom (n=2)} & 0.785(0.0075) & 0.304(0.0035) \\
    \hline
    \textbf{XLNet (n=2)} & 0.802(0.0039) & 0.439(0.0014) \\
    \hline
    \textbf{MLT combiner (n=32)} & 0.821(0.0031) & 0.439(0.0049) \\
    \hline
    \textbf{MLT-plus-TM (n=32)} & 0.780(0.0096) & 0.467(0.0095)\\
    \hline
  \end{tabular}
  }
\end{table}

\begin{table*}[h!]
%\centering
    \caption{\textbf{CAD~\acrshort{m-c} Scores per category on the {\fontfamily{qcr}\selectfont test} set. Scores: P=Precision, R=Recall, F1=F1}}
  \label{tab:cad_mc_per_category}
  \footnotesize
  \resizebox{\textwidth}{!}{
  \begin{tabular}{|p{2cm}||p{1cm}|p{1cm}|p{1cm}|p{1cm}|p{1cm}|p{1cm}|p{1cm}|p{1cm}|p{1cm}|p{1cm}|p{1cm}|p{1cm}|p{1cm}|p{1cm}|p{1cm}|}
    \hline
    \multirow{3}{*}{\textbf{Model}(\#)} & \multicolumn{3}{|c|}{\textbf{Neutral}} & \multicolumn{3}{|c|}{\textbf{Identity-directed}} &  \multicolumn{3}{|c|}{\textbf{Affiliation-directed}} & \multicolumn{3}{|c|}{\textbf{Person-directed}} &  \multicolumn{3}{|c|}{\textbf{Counter Speech}}\\
    & \multicolumn{3}{|c|}{N=4342} & \multicolumn{3}{|c|}{N=513} &  \multicolumn{3}{|c|}{N=243} & \multicolumn{3}{|c|}{N=237} &  \multicolumn{3}{|c|}{ N=66}\\ 
    \cline{2-16}
    & P & R & F1 & P & R & F1 & P & R & F1 & P & R & F1 & P & R & F1\\ 
    \hline \hline
    \textbf{DistilBERT Baseline 1} (n=5) & 0.88 & 0.917 & 0.898 & 0.414 & 0.473 & 0.441 & 0.368 & 0.45 & 0.405 & 0.359 & 0.404 & 0.38 & 0.083 & 0.073 & 0.076\\
    \hline
    \textbf{BERT Baseline 2} (n=5) & 0.883 & 0.922 & 0.902 & 0.411 & 0.51 & 0.455 & 0.368 & 0.481 & 0.416 & 0.356 & 0.488 & 0.411 & 0.107 & 0.088 & 0.091\\
    \hline
    \textbf{Bigbird} (n=2) & 0.940 & 0.871 & 0.904 & 0.298 & 0.621 & 0.402 & 0.327 & 0.404 & 0.361 & 0.317 & 0.481 & 0.379 & 0.030 & 0.084 & 0.044\\
    \hline
    \textbf{BERT} (n=2) & 0.933 & 0.876 & 0.903 & 0.322 & 0.581 & 0.414 & 0.333 & 0.497 & 0.396 & 0.285 & 0.467 & 0.354 & 0.038 & 0.179 & 0.053\\
    \hline
    \textbf{BERTweet} (n=2) & 0.926 & 0.882 & 0.904 & 0.356 & 0.547 & 0.429 & 0.377 & 0.466 & 0.416 & 0.382 & 0.441 &  0.409 & 0.030 & 0.129 & 0.046\\
    \hline
    \textbf{Bloom} (n=2) & 0.964 & 0.845 & 0.901 & 0.157 & 0.527 & 0.233 & 0.191 & 0.367 & 0.245 & 0.122 & 0.239 &  0.142 & 0.0000 & 1.0000 & 0.0000\\
    \hline
    \textbf{XLNet} (n=2) & 0.927 & 0.876 & 0.901 & 0.386 & 0.517 & 0.442 & 0.329 & 0.470 & 0.387 & 0.325 & 0.469 &  0.383 & 0.068 & 0.109 & 0.083\\
    \hline
    \textbf{MLT combiner} (n=32) &  0.871 & 0.960 & 0.913 &  0.678 &  0.338 & 0.451 & 0.573 & 0.343 & 0.429 & 0.609 & 0.302 &  0.403 & 1.0000 & 0.0000 & 0.0000\\
    \hline
    \textbf{MLT-plus-TM} (n=32) & 0.866 & 0.968 & 0.914 & 0.504 & 0.478 & 0.488 & 0.418 & 0.476 & 0.443 & 0.471 & 0.427 & 0.441 & 0.269 & 0.046 & 0.050\\
    \hline
  \end{tabular}
  }
\end{table*}

\subsection{Performance of~\acrshort{b-c}} \label{sec:pbi}
Table~\ref{tab:bc_on_cad} displays the performance on CAD test. Once again, BERTweet emerges as the top-performing individual model. But MLT outperforms all individual models in terms of both \textit{macroF1} score and accuracy. MLT-plus-TM further enhances the \textit{macroF1} score to
its highest level, albeit with a slight decrease in accuracy. Both ~\acrshort{m-c} and ~\acrshort{b-c} on CAD test exhibit very similar patterns in TM.

Table~\ref{tab:bc_on_hatecheck} presents the performance on dataset~\cite{rottger2020hatecheck}, which represents a transfer learning scenario involving an out-of-distribution target dataset. Our objective is to assess how well MLT and TM trained from CAD generalize to dataset~\cite{rottger2020hatecheck}. We believe that models trained with 
CAD data tend to capture the real-world imbalanced distribution between the `Hateful' and "Non-hateful" classes,
which may introduce some bias towards predicting the majority  "Non-hateful" class. Therefore, TM is applied to counterbalance this bias. It is important to note that dataset~\cite{rottger2020hatecheck} deliberately has a much larger portion of the 'Hateful' class, resulting in an opposite class distribution compared to CAD. This deliberate design choice serves the purpose of testing and evaluation.

\begin{table*}[h!]
%\centering
    \caption{\textbf{B-C performance on CAD test data. Scores: Accuracy (Recall) and F1 with Std by test case label}}
  \label{tab:bc_on_cad}
  \footnotesize
  \resizebox{\textwidth}{!}{
  \begin{tabular}{|c||c|c|c|c|c|c|}
    \hline
    \multirow{2}{*}{\textbf{Model}(\#)} & \multicolumn{2}{|c|}{\textbf{Hateful} N=899} & \multicolumn{2}{|c|}{\textbf{Non-hateful} N=4408} & \multicolumn{2}{|c|}{\textbf{Total} N=5307}\\
    \cline{2-7}
    & Accuracy(Recall) & F1 & Accuracy(Recall) & F1 & Accuracy(Recall) & \textit{macroF1}\\
    \hline \hline
    \textbf{Bigbird (n=2)} & 0.572(0.0047) & 0.460(0.0001) & 0.882(0.0002) & 0.911(0.0006) & 0.847(0.0008) & 0.685(0.0003)\\
    \hline
    \textbf{BERT (n=2)} & 0.546(0.0145) & 0.492(0.0056) &  0.891(0.0004) & 0.907(0.0024) & 0.843(0.0037) & 0.700(0.0040)\\
    \hline
    \textbf{BERTweet (n=2)} & 0.580(0.0421) & 0.510(0.0260) & 0.895(0.0099) & 0.911(0.0056) & 0.850(0.0068) & 0.711(0.0102)\\
    \hline
    \textbf{Bloom (n=2)} & 0.499(0.0430) & 0.372(0.0653) & 0.870(0.0131) & 0.899(0.0082) & 0.827(0.0096) & 0.635(0.0286)\\
    \hline
    \textbf{XLNet (n=2)} & 0.531(0.0260) & 0.489(0.0028) & 0.892(0.0029) & 0.904(0.0048) & 0.839(0.0067) & 0.697(0.0010)\\
    \hline
    \textbf{MLT (n=32)} & 0.430(0.0172) & 0.510(0.0081) & 0.948(0.0055) & 0.918(0.0016) & 0.860(0.0021) & 0.714(0.0038)\\
    \hline
    \textbf{MLT-plus-TM(n=32)} & 0.500(0.0304) & 0.533(0.0080) & 0.923(0.0122) & 0.912(0.0039) & 0.851(0.0052) & 0.722(0.0031)\\
    \hline
  \end{tabular}
  }
\end{table*}

\begin{table*}[h!]
%\centering
    \caption{\textbf{B-C performance on dataset~\cite{rottger2020hatecheck}. Scores: Accuracy (Recall) and F1 with Std by test case label}}
  \label{tab:bc_on_hatecheck}
  \footnotesize
  \resizebox{\textwidth}{!}{
  \begin{tabular}{|c||c|c|c|c|c|c|}
    \hline
    \multirow{2}{*}{\textbf{Model}(\#)} & \multicolumn{2}{|c|}{\textbf{Hateful} N=2563} & \multicolumn{2}{|c|}{\textbf{Non-hateful} N=1165} & \multicolumn{2}{|c|}{\textbf{Total} N=3728}\\
    \cline{2-7}
    & Accuracy(Recall) & F1 & Accuracy(Recall) & F1 & Accuracy(Recall) & \textit{macroF1}\\
    \hline \hline
    \textbf{B-D} & 0.755 & 0.738 & 0.36 & 0.379 & 0.632 & 0.559\\
    \hline
    \textbf{B-F} & 0.655 & 0.694 & 0.485 & 0.432 & 0.602 & 0.563\\
    \hline
    \textbf{Google-P} & 0.895 & 0.841 & 0.482 & 0.563 & 0.766 & 0.702\\
    \hline
    \textbf{Bigbird (n=2)} & 0.847(0.0021) & 0.559(0.0120) & 0.394(0.0033)  & 0.535(0.0014) & 0.547(0.0070) & 0.547(0.0067)\\
    \hline
    \textbf{BERT (n=2)} & 0.795(0.0028) & 0.766(0.0183) &  0.506(0.0233) & 0.539(0.0012) & 0.690(0.0164) & 0.653(0.0097)\\
    \hline
    \textbf{BERTweet (n=2)} & 0.805(0.0099) & 0.810(0.0374) & 0.613(0.0924) & 0.573(0.0018) & 0.739(0.0363) & 0.692(0.0196)\\
    \hline
    \textbf{Bloom (n=2)} & 0.696(0.0147) & 0.156(0.0475) & 0.313(0.0011) & 0.466 (0.0058) & 0.346(0.0099) & 0.311(0.0208)\\
    \hline
    \textbf{XLNet (n=2)} & 0.776(0.0128) & 0.731(0.0249) & 0.455(0.0147) & 0.498(0.0185) & 0.651(0.0166) & 0.614(0.0032)\\
    \hline
    \textbf{MLT (n=32)} & 0.743(0.0321) & 0.783(0.0149) & 0.660(0.0325) & 0.593(0.0065) & 0.717(0.0128) & 0.688(0.0081)\\
    \hline
    \textbf{MLT-plus-TM(n=32)} & 0.833(0.0346) & 0.815(0.0106) & 0.540(0.0597) & 0.565(0.0255) & 0.741(0.0087) & 0.690(0.0109)\\
    \hline
  \end{tabular}
  }
\end{table*}

For ease of comparison, we include the following model performances reported in 
\cite{rottger2020hatecheck} in Table ~\ref{tab:bc_on_hatecheck}: 
two pre-trained BERT models denoted as B-D and B-F, as well as one 
commercial non-open model - Google Jigsaw's Perspective - denoted as Google-P.
It is important to note that these models have been trained on different data; however, they are all tested on the same suite~\cite{rottger2020hatecheck}.

Table~\ref{tab:bc_on_hatecheck} illustrates that MLT slightly underperforms when compared to the best individual model, BERTweet, in transfer learning.  We provide further analysis on this in Section~\ref{sec:mlt-analysis}. However, MLT-plus-TM achieves the best performance in terms of accuracy and performs nearly as well as the best model in terms of \textit{macroF1} score.

At the category level, in the "Hateful" category (Table~\ref{tab:bc_on_hatecheck}), MLT-plus-TM outperforms all individual participating models in terms of F1 score but falls slightly behind the Google-P model ($0.815$ vs. $0.841$). In the "Non-hateful" category (Table~\ref{tab:bc_on_hatecheck}), MLT outperforms all other models, including the Google-P model ($0.593$ vs. $0.563$), in terms of F1 score. MLT-plus-TM also marginally outperforms the Google-P model ($0.565$ vs. $0.563$).

Overall, based on Table~\ref{tab:bc_on_hatecheck}, we can observe that both MLT ($0.688$) and MLT-plus-TM ($0.690$) exhibit performance almost comparable to that of the Google-P model ($0.702$) in \textit{macroF1}.

\begin{table}[!ht]
    \centering
    \caption{\textbf{MLT M-C performance in \textit{macroF1} scores order on CAD test}}
    \begin{tabular}{|l|l|l|}
    \hline
        \textbf{Model Combination} & \textbf{\textit{macroF1}} & \textbf{Accuracy} \\ \hline
        M1, M2, M3, M4, M5 & 0.439 & 0.821 \\ \hline
        M1, M2, M3, M5 & 0.438 & 0.821 \\ \hline
        M1, M3, M4, M5 & 0.432 & 0.820 \\ \hline
        M1, M2, M3, M4 & 0.431 & 0.821 \\ \hline
        M1, M2, M3 & 0.431 & 0.824 \\ \hline
        M2, M3, M4, M5 & 0.429 & 0.819 \\ \hline
        M1, M3, M5 & 0.429 & 0.821 \\ \hline
        M1, M2, M4, M5 & 0.428 & 0.820 \\ \hline
        M2, M3, M5 & 0.427 & 0.819 \\ \hline
        M1, M2, M5 & 0.425 & 0.822 \\ \hline
        M2, M3, M4 & 0.410 & 0.819 \\ \hline
        M3, M4, M5 & 0.407 & 0.819 \\ \hline
        M1, M3, M4 & 0.406 & 0.816 \\ \hline
        M2, M3 & 0.406 & 0.817 \\ \hline
        M1, M3 & 0.404 & 0.823 \\ \hline
        M3, M4 & 0.403 & 0.803 \\ \hline
        M3, M5 & 0.402 & 0.820 \\ \hline
        M1, M2, M4 & 0.401 & 0.820 \\ \hline
        M2, M4, M5 & 0.401 & 0.820 \\ \hline
        M1, M4, M5 & 0.399 & 0.814 \\ \hline
        M1, M5 & 0.392 & 0.814 \\ \hline
        M2, M5 & 0.391 & 0.805 \\ \hline
        M1, M2 & 0.391 & 0.826 \\ \hline
        M2, M4 & 0.384 & 0.805 \\ \hline
        M4, M5 & 0.380 & 0.803 \\ \hline
        M1, M4 & 0.368 & 0.806 \\ \hline
        \textbf{Average score} & \textbf{0.410} & \textbf{0.817} \\ \hline
    \end{tabular}
    \label{tab:mlt_mc}
\end{table}

\begin{table}[!ht]
    \centering
    \caption{\textbf{MLT-plus-TM M-C performance in \textit{macroF1} scores order on CAD test}}
    \begin{tabular}{|l|l|l|}
    \hline
        \textbf{Model Combination} & \textbf{\textit{macroF1}} & \textbf{Accuracy} \\ \hline
        M2, M3, M5 & 0.469 & 0.778 \\ \hline
        M1, M2, M3, M4, M5 & 0.467 & 0.780 \\ \hline
        M1, M3, M4, M5 & 0.465 & 0.780 \\ \hline
        M1, M2, M3, M5 & 0.464 & 0.784 \\ \hline
        M2, M3, M4, M5 & 0.464 & 0.784 \\ \hline
        M3, M4, M5 & 0.462 & 0.783 \\ \hline
        M1, M3, M5 & 0.461 & 0.787 \\ \hline
        M1, M2, M4, M5 & 0.459 & 0.786 \\ \hline
        M1, M2, M3, M4 & 0.459 & 0.786 \\ \hline
        M3, M5 & 0.459 & 0.789 \\ \hline
        M1, M2, M3 & 0.457 & 0.787 \\ \hline
        M1, M3 & 0.457 & 0.782 \\ \hline
        M2, M3 & 0.456 & 0.783 \\ \hline
        M1, M3, M4 & 0.456 & 0.783 \\ \hline
        M1, M2, M5 & 0.454 & 0.790 \\ \hline
        M2, M3, M4 & 0.452 & 0.787 \\ \hline
        M1, M4, M5 & 0.451 & 0.771 \\ \hline
        M2, M4, M5 & 0.449 & 0.781 \\ \hline
        M2, M5 & 0.446 & 0.782 \\ \hline
        M3, M4 & 0.446 & 0.794 \\ \hline
        M1, M5 & 0.445 & 0.764 \\ \hline
        M1, M2, M4 & 0.443 & 0.780 \\ \hline
        M1, M2 & 0.440 & 0.776 \\ \hline
        M4, M5 & 0.437 & 0.770 \\ \hline
        M2, M4 & 0.428 & 0.782 \\ \hline
        M1, M4 & 0.420 & 0.776 \\ \hline
        \textbf{Average score} & \textbf{0.453} & \textbf{0.782} \\ \hline
    \end{tabular}
    \label{tab:mlt_tm_mc}
\end{table}

\begin{table}[!ht]
    \centering
    \caption{\textbf{MLT B-C performance in \textit{macroF1} scores order on CAD test}}
    \begin{tabular}{|l|l|l|}
    \hline
        \textbf{Model Combination} & \textbf{\textit{macroF1}} & \textbf{Accuracy} \\ \hline
        M2, M3, M5 & 0.715 & 0.859 \\ \hline
        M1, M2, M3, M4, M5 & 0.714 & 0.860 \\ \hline
        M1, M2, M3, M5 & 0.714 & 0.860 \\ \hline
        M2, M3, M4, M5 & 0.713 & 0.860 \\ \hline
        M1, M2, M3, M4 & 0.709 & 0.859 \\ \hline
        M1, M2, M3 & 0.709 & 0.859 \\ \hline
        M2, M4, M5 & 0.709 & 0.857 \\ \hline
        M1, M3, M4, M5 & 0.709 & 0.859 \\ \hline
        M2, M5 & 0.708 & 0.858 \\ \hline
        M2, M3 & 0.708 & 0.860 \\ \hline
        M1, M2, M4, M5 & 0.708 & 0.857 \\ \hline
        M1, M3, M5 & 0.708 & 0.859 \\ \hline
        M3, M4 & 0.706 & 0.857 \\ \hline
        M3, M4, M5 & 0.706 & 0.857 \\ \hline
        M2, M3, M4 & 0.706 & 0.859 \\ \hline
        M1, M3, M4 & 0.706 & 0.860 \\ \hline
        M1, M2, M5 & 0.705 & 0.856 \\ \hline
        M1, M3 & 0.705 & 0.860 \\ \hline
        M3, M5 & 0.704 & 0.857 \\ \hline
        M1, M4, M5 & 0.704 & 0.858 \\ \hline
        M1, M2, M4 & 0.701 & 0.854 \\ \hline
        M1, M5 & 0.699 & 0.856 \\ \hline
        M2, M4 & 0.697 & 0.849 \\ \hline
        M1, M2 & 0.697 & 0.855 \\ \hline
        M4, M5 & 0.694 & 0.850 \\ \hline
        M1, M4 & 0.688 & 0.853 \\ \hline
        \textbf{Average score} & \textbf{0.706} & \textbf{0.857} \\ \hline
    \end{tabular}
    \label{tab:mlt_bc_on_cad}
\end{table}

\begin{table}[!ht]
    \centering
    \caption{\textbf{MLT-plus-TM B-C performance in \textit{macroF1} scores order on CAD test}}
    \begin{tabular}{|l|l|l|}
    \hline
        \textbf{Model Combination} & \textbf{\textit{macroF1}} & \textbf{Accuracy} \\ \hline
        M2, M3 & 0.726 & 0.848 \\ \hline
        M1, M3, M5 & 0.724 & 0.845 \\ \hline
        M1, M2, M3, M5 & 0.723 & 0.852 \\ \hline
        M1, M3, M4, M5 & 0.723 & 0.847 \\ \hline
        M1, M2, M3, M4 & 0.723 & 0.847 \\ \hline
        M1, M2, M3 & 0.722 & 0.845 \\ \hline
        M1, M2, M3, M4, M5 & 0.722 & 0.851 \\ \hline
        M2, M3, M4 & 0.722 & 0.848 \\ \hline
        M2, M3, M5 & 0.721 & 0.847 \\ \hline
        M2, M3, M4, M5 & 0.720 & 0.847 \\ \hline
        M1, M3 & 0.720 & 0.843 \\ \hline
        M3, M4, M5 & 0.719 & 0.847 \\ \hline
        M1, M3, M4 & 0.719 & 0.846 \\ \hline
        M3, M5 & 0.717 & 0.839 \\ \hline
        M1, M2, M4, M5 & 0.717 & 0.844 \\ \hline
        M1, M2, M5 & 0.716 & 0.842 \\ \hline
        M2, M4, M5 & 0.714 & 0.842 \\ \hline
        M3, M4 & 0.713 & 0.845 \\ \hline
        M1, M4, M5 & 0.712 & 0.838 \\ \hline
        M1, M5 & 0.712 & 0.840 \\ \hline
        M2, M5 & 0.712 & 0.844 \\ \hline
        M1, M2, M4 & 0.711 & 0.833 \\ \hline
        M1, M2 & 0.708 & 0.833 \\ \hline
        M2, M4 & 0.705 & 0.837 \\ \hline
        M1, M4 & 0.701 & 0.833 \\ \hline
        M4, M5 & 0.699 & 0.831 \\ \hline
        \textbf{Average score} & \textbf{0.716} & \textbf{0.843} \\ \hline
    \end{tabular}
    \label{tab:mlt_tm_bc_on_cad}
\end{table}

\begin{table}[!ht]
    \centering
    \caption{\textbf{MLT B-C performance in \textit{macroF1} scores order on dataset~\cite{rottger2020hatecheck}}}
    \begin{tabular}{|l|l|l|}
    \hline
        \textbf{Model Combination} & \textbf{\textit{macroF1}} & \textbf{Accuracy} \\ \hline
        M1, M2, M3 & 0.697 & 0.723 \\ \hline
        M1, M2, M3, M4 & 0.693 & 0.718 \\ \hline
        M1, M2, M3, M5 & 0.690 & 0.720 \\ \hline
        M1, M2, M3, M4, M5 & 0.688 & 0.717 \\ \hline
        M2, M3 & 0.688 & 0.722 \\ \hline
        M2, M3, M5 & 0.688 & 0.726 \\ \hline
        M3, M4 & 0.683 & 0.710 \\ \hline
        M2, M3, M4 & 0.681 & 0.712 \\ \hline
        M1, M3, M5 & 0.681 & 0.702 \\ \hline
        M1, M3, M4, M5 & 0.681 & 0.703 \\ \hline
        M2, M3, M4, M5 & 0.681 & 0.713 \\ \hline
        M3, M5 & 0.681 & 0.712 \\ \hline
        M3, M4, M5 & 0.673 & 0.703 \\ \hline
        M1, M2, M5 & 0.661 & 0.684 \\ \hline
        M1, M2, M4, M5 & 0.657 & 0.678 \\ \hline
        M2, M5 & 0.646 & 0.674 \\ \hline
        M1, M3 & 0.644 & 0.654 \\ \hline
        M1, M3, M4 & 0.642 & 0.651 \\ \hline
        M2, M4, M5 & 0.641 & 0.666 \\ \hline
        M2, M4 & 0.635 & 0.659 \\ \hline
        M1, M2, M4 & 0.626 & 0.635 \\ \hline
        M1, M2 & 0.615 & 0.622 \\ \hline
        M1, M4, M5 & 0.586 & 0.590 \\ \hline
        M1, M5 & 0.585 & 0.589 \\ \hline
        M4, M5 & 0.582 & 0.594 \\ \hline
        M1, M4 & 0.515 & 0.516 \\ \hline
        \textbf{Average score} & \textbf{0.652} & \textbf{0.673} \\ \hline
    \end{tabular}
    \label{tab:mlt_bc_on_hatecheck}
\end{table}

\begin{table}[!ht]
    \centering
    \caption{\textbf{MLT-plus-TM B-C performance in \textit{macroF1} scores order on dataset~\cite{rottger2020hatecheck}}}
    \begin{tabular}{|l|l|l|}
    \hline
        \textbf{Model Combination} & \textbf{\textit{macroF1}} & \textbf{Accuracy} \\ \hline
        M1, M3 & 0.709 & 0.763 \\ \hline
        M1, M3, M4 & 0.699 & 0.745 \\ \hline
        M1, M2, M3, M4 & 0.698 & 0.753 \\ \hline
        M1, M2, M3 & 0.697 & 0.757 \\ \hline
        M1, M2, M3, M5 & 0.692 & 0.745 \\ \hline
        M1, M2, M3, M4, M5 & 0.690 & 0.741 \\ \hline
        M3, M4 & 0.689 & 0.743 \\ \hline
        M2, M3 & 0.687 & 0.754 \\ \hline
        M2, M3, M4 & 0.686 & 0.747 \\ \hline
        M1, M3, M4, M5 & 0.685 & 0.741 \\ \hline
        M1, M3, M5 & 0.681 & 0.745 \\ \hline
        M1, M2 & 0.678 & 0.729 \\ \hline
        M1, M2, M4 & 0.676 & 0.727 \\ \hline
        M2, M3, M5 & 0.675 & 0.743 \\ \hline
        M2, M3, M4, M5 & 0.674 & 0.740 \\ \hline
        M3, M4, M5 & 0.670 & 0.731 \\ \hline
        M1, M2, M4, M5 & 0.668 & 0.715 \\ \hline
        M1, M2, M5 & 0.667 & 0.719 \\ \hline
        M3, M5 & 0.658 & 0.737 \\ \hline
        M1, M5 & 0.656 & 0.694 \\ \hline
        M2, M4 & 0.656 & 0.701 \\ \hline
        M1, M4, M5 & 0.655 & 0.695 \\ \hline
        M2, M4, M5 & 0.652 & 0.703 \\ \hline
        M2, M5 & 0.651 & 0.704 \\ \hline
        M4, M5 & 0.620 & 0.663 \\ \hline
        M1, M4 & 0.605 & 0.617 \\ \hline
        \textbf{Average score} & \textbf{0.672} & \textbf{0.725} \\ \hline
    \end{tabular}
    \label{tab:mlt_tm_bc_on_hatecheck}
\end{table}

\subsection{Inference from Combiner Performance}\label{sec:combinerPerformance}

Section~\ref{sec:pmc} and ~\ref{sec:pbi} demonstrate that MLT-plus-TM significantly improves performance, surpassing all individual models and achieving best overall performance. 
Even when we transfer the learned MLT combining weights and threshold cut-off values from CAD to the dataset~\cite{rottger2020hatecheck}, where the data distribution differs substantially from the training data, the final MLT-plus-TM remains at least on par with the best participating model.

We also provide MLT and MLT-plus-TM results for all possible model combinations (~\ref{tab:mlt_mc}, ~\ref{tab:mlt_tm_mc}, ~\ref{tab:mlt_bc_on_cad}, ~\ref{tab:mlt_tm_bc_on_cad}, ~\ref{tab:mlt_bc_on_hatecheck}, and ~\ref{tab:mlt_tm_bc_on_hatecheck}). The best combiner for ~\acrshort{m-c} on the CAD test is (M2, M3, M5) with a \textit{macroF1} score of $0.469$, compared to $0.467$ in all combinations of (M1, M2, M3, M4, M5). In ~\acrshort{b-c} on the CAD test, the best combination is (M2, M3) with a \textit{macroF1} score of $0.726$, while all combinations yield $0.722$. For ~\acrshort{b-c} on dataset~\cite{rottger2020hatecheck} transfer learning, the best combination is (M1, M3) with a \textit{macroF1} score of $0.709$, whereas all combinations yield $0.690$. These results clearly demonstrate that all combinations fall short of achieving the highest possible performance.
Nevertheless, we still report the performance by combining all five models (M1, M2, M3, M4, M5).

\begin{figure}[!th]
    \centering
    \caption{\textbf{Combiner Performance F1 scores on dataset~\cite{rottger2020hatecheck} using transfer learning:}  MLT \textbf{B-C} performance and MLT-plus-TM \textbf{B-C} performance for all the possible combinations. MLT \textbf{B-C} performance is taken from table~\ref{tab:mlt_bc_on_hatecheck}, and corresponding MLT-plus-TM \textbf{B-C} performance is taken from table~\ref{tab:mlt_tm_bc_on_hatecheck}.}
    \label{fig:hatecheck_f1_mlt_mlt+tm}
    {
    \includegraphics[width=0.9\textwidth]{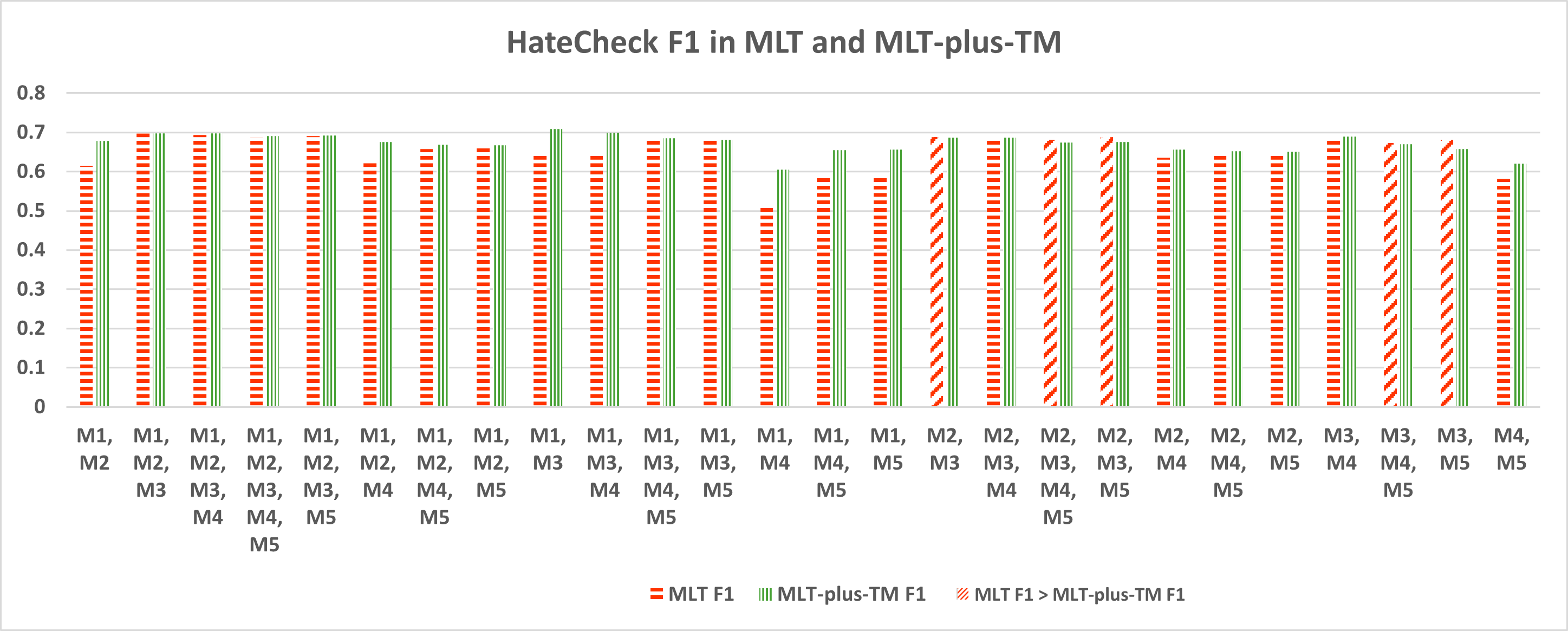} 
    } 
\end{figure}

\begin{figure}[!th]
    \centering
    \caption{\textbf{Combiner Performance Accuracy scores on dataset~\cite{rottger2020hatecheck} using transfer learning:}  MLT \textbf{B-C} performance and MLT-plus-TM \textbf{B-C} performance for all the possible combinations. MLT \textbf{B-C} performance is taken from Table~\ref{tab:mlt_bc_on_hatecheck}, and corresponding MLT-plus-TM \textbf{B-C} performance is taken from Table~\ref{tab:mlt_tm_bc_on_hatecheck}.}
    \label{fig:hatecheck_acc_mlt_mlt+tm}
    {
    \includegraphics[width=0.9\textwidth]{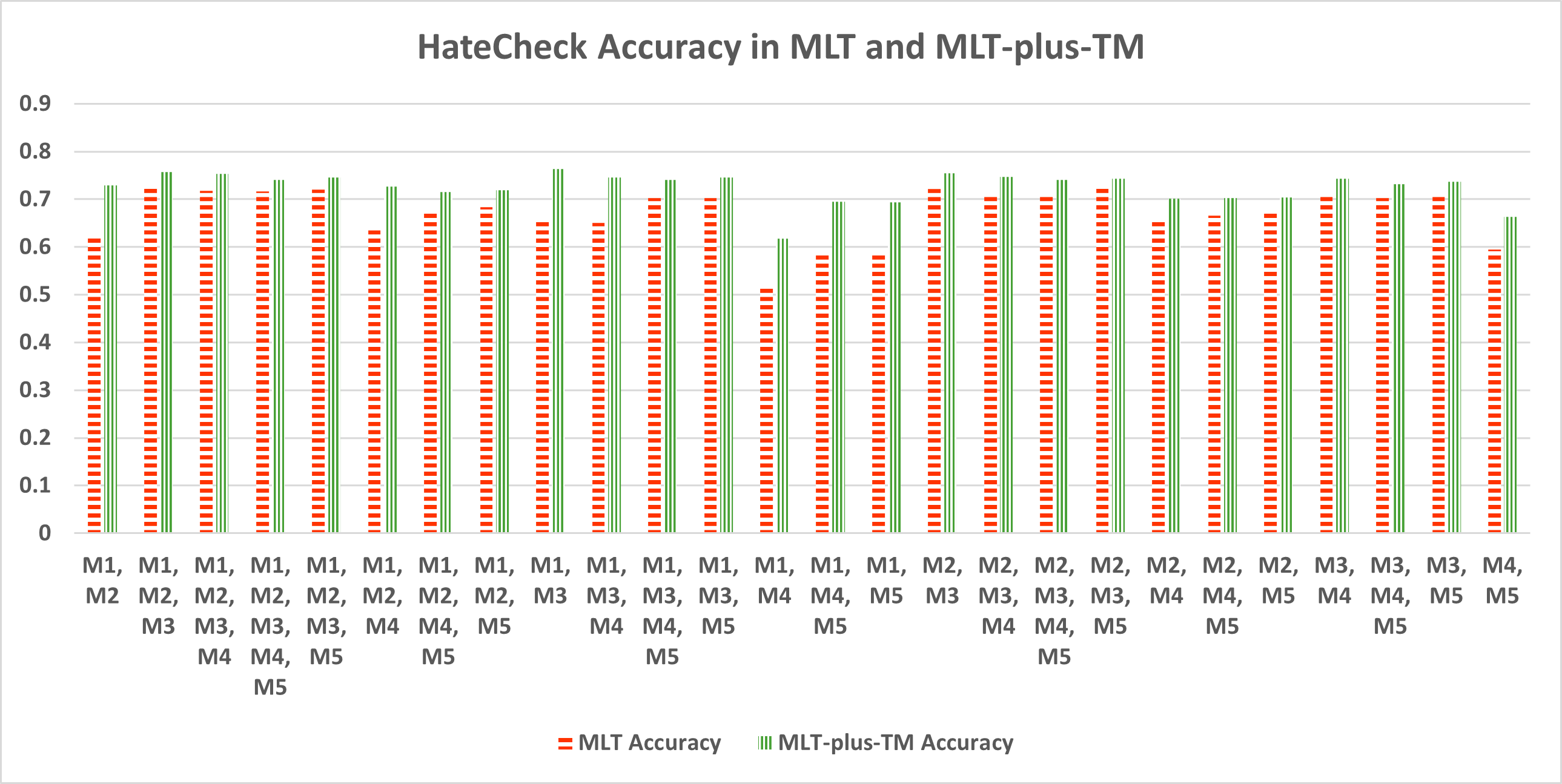} 
    } 
\end{figure}

\ifbool{false}
{
From our experiments, we know that the top three performing models are Bertweet(M3), Bert(M2) and XLNet (M5). From Tables~\ref{tab:mlt_mc}, ~\ref{tab:mlt_tm_mc}, ~\ref{tab:mlt_bc_on_cad}, and ~\ref{tab:mlt_tm_bc_on_cad}, we can see that
the top performing combiner across all these tables are some combination of these three models. For example, in case of CAD {\fontfamily{qcr}\selectfont test} data in ~\acrshort{b-c}: the best F1 score is achieved by the combination of M2, M3 and M5 (table~\ref{tab:mlt_bc_on_cad}) while applying TM on top of MLT, the best F1 score is achieved by using the combination of M2 and M3 (table~\ref{tab:mlt_tm_bc_on_cad}) respectively. In transfer learning on out-of-distribution data (Table~\ref{tab:mlt_bc_on_hatecheck} and~\ref{tab:mlt_tm_bc_on_hatecheck}), the dynamics can change due to datasets having different data distribution. However, combining models overall has benefits in terms of improving the metrics. TM improves the F1-score in both the
CAD {\fontfamily{qcr}\selectfont test} dataset as well as the dataset~\cite{rottger2020hatecheck} for all model combinations. This can be verified by comparing Table~\ref{tab:mlt_bc_on_cad} with Table~\ref{tab:mlt_tm_bc_on_cad}, and comparing Table~\ref{tab:mlt_bc_on_hatecheck} with Table~\ref{tab:mlt_tm_bc_on_hatecheck}, for the corresponding model combinations. This is shown graphically in Figure~\ref{fig:cad_test_tm} and Figure~\ref{fig:hatecheck_tm}.}

% \begin{figure}[!t]
%     \centering
%     \caption{\textbf{Combiner Performance on CAD dataset:} MLT \textbf{B-C} performance and MLT-plus-TM \textbf{B-C} performance on CAD {\fontfamily{qcr}\selectfont test} dataset for all the possible combinations. MLT \textbf{B-C} performance is taken from Table~\ref{tab:mlt_bc_on_cad} and corresponding MLT-plus-TM \textbf{B-C} performance is taken from Table~\ref{tab:mlt_tm_bc_on_cad}.}
%     \label{fig:cad_test_tm}
%     {
%     \includegraphics[width=0.9\textwidth]{} 
%     } 
% \end{figure}

% \begin{figure}[!th]
%     \centering
%     \caption{\textbf{Combiner Performance on HateCheck dataset using transfer learning:}  MLT \textbf{B-C} performance and MLT-plus-TM \textbf{B-C} performance for all the possible combinations. MLT \textbf{B-C} performance is taken from Table~\ref{tab:mlt_bc_on_hatecheck} and corresponding MLT-plus-TM \textbf{B-C} performance is taken from Table~\ref{tab:mlt_tm_bc_on_hatecheck}.}
%     \label{fig:hatecheck_tm}
%     {
%     \includegraphics[width=0.9\textwidth]{} 
%     } 
% \end{figure}

\ifbool{false}
{
Further, for each combination of respective size, there is at least one model as the participant from the top three ones i.e., M2, M3 or M5:
\begin{itemize}
    \item Combination size = 2:\\
    There are a total of $10$ combinations possible with combination size set as 2. In case CAD {\fontfamily{qcr}\selectfont test} data, in M-C, we see that the best combination is of M2 and M3 with F1 score as $0.4062$  and accuracy as $0.8165$ (table~\ref{tab:mlt_mc}); if we see MLT-plus-TM performance then again M2 and M3 emerges as the best performing combination of size two with F1 score as 0.4590 (table~\ref{tab:mlt_tm_mc}). In case of B-C, we can see from table~\ref{tab:mlt_bc_on_cad}, the best performing model combination is M2 and M5 with F1 score as $0.708$ and accuracy as $0.858$. In case of MLT-plus-TM, M2 and M3 (table~\ref{tab:mlt_tm_bc_on_cad}) pop up as the best combiner with F1 score as $0.726$ and accuracy as $0.848$.\\
    In the case of the dataset~\cite{rottger2020hatecheck}, we can see that the combination of M2 and M3 emerges on the top with $0.688$ as the F1 score and $0.722$ as the accuracy (table~\ref{tab:mlt_bc_on_hatecheck}). In case of MLT-plus-TM, we see that the combination of M1 and M3 is the best performing overall combination with F1 score as $0.709$ and accuracy as $0.763$ (table~\ref{tab:mlt_tm_bc_on_hatecheck}).
    \item Combination size = 3:\\
    In this combination size, there are also a total of $10$ possible combinations. In case CAD {\fontfamily{qcr}\selectfont test} data, in M-C, M1, M2 and M3 emerges as winner with F1 score as $0.4308$ (table~\ref{tab:mlt_mc}); if we see the performance of MLT-plus-TM then M2, M3 and M5 emerges as the best performing combination of size three and in fact takes up the top spot of table~\ref{tab:mlt_tm_mc} with F1 score as 0.4687. If case of B-C performance, M2, M3 and M5 emerges as the best performing combination for the CAD {\fontfamily{qcr}\selectfont test} data. The best F1 score achieved is $0.715$ while the best accuracy is $0.859$. It is intuitive that the combination size equal to the top performing models would yield the best metrics.
    In case of threshold moving technique, M1, M3 and M5 emerges as the best performing combination with F1 score as $0.724$ and accuracy as $0.845$ (ref table~\ref{tab:mlt_bc_on_cad} and \ref{tab:mlt_tm_bc_on_cad}).\\
    M1, M2 and M3 is the best performing combination for this combination size 3, with F1 score as $0.697$ and accuracy as $0.723$. Upon applying threshold moving technique, M1, M3 and M4 emerges as the best performing model with F1 score as $0.699$ and accuracy as $0.745$. M4 is one of the participant of the best performing combination (combination size 3) in threshold moving technique; one plausible reason could be, good contribution from M4 when the combination size is 3 (refer table~\ref{tab:mlt_bc_on_hatecheck} and \ref{tab:mlt_tm_bc_on_hatecheck}). 
    \item Combination size = 4: \\
    There are a total of $5$ combinations possible when the combination size equals 4. For M-C performance, in CAD {\fontfamily{qcr}\selectfont test} data, we see that M1, M2, M3 and M5 comes out as winner with F1 score as $0.4375$ and accuracy as $0.8213$ (table~\ref{tab:mlt_mc}); for MLT-plus-TM, we M1, M3, M4 and M5 as the best performing combination of size four table~\ref{tab:mlt_tm_mc} with F1 score as 0.4654. If case of B-C performance, M1, M2, M3 and M5 emerges as the best combination with F1 score as $0.714$ and accuracy as $0.86$ for the CAD {\fontfamily{qcr}\selectfont test} data. By using the threshold moving technique, the same combination again pops up as the winner with marginal gain in F1 score $(0.723)$ and accuracy as $0.852$ (refer table~\ref{tab:mlt_bc_on_cad} and \ref{tab:mlt_tm_bc_on_cad}).\\
    In the case of the dataset~\cite{rottger2020hatecheck}, M1, M2, M3 and M4 emerges as the best performing combination (combination size = 4), with F1 score as $0.693$ and accuracy as $0.718$. When using the threshold moving technique, again M1, M2, M3 and M4 emerges as the best performing combination with $0.698$ as the F1 score and $0.753$ as the accuracy (refer table~\ref{tab:mlt_bc_on_hatecheck} and \ref{tab:mlt_tm_bc_on_hatecheck}).
    \item Combination size = 5: \\
    In case CAD {\fontfamily{qcr}\selectfont test} data, the combination size of five emerges as the best overall performer with F1 score as $0.4392$ and accuracy as $0.8205$ (table~\ref{tab:mlt_mc}); if we see the performance of MLT-plus-TM then we can see that the combination is second best with F1 score as 0.467. If case of B-C performance on CAD {\fontfamily{qcr}\selectfont test} data, the F1 score is $0.714$ and the accuracy is $0.86$. Upon applying threshold moving technique, the F1 score becomes $0.722$ and the accuracy becomes $0.851$.\\
    In the case of the dataset~\cite{rottger2020hatecheck}, F1 score is $0.688$ and accuracy is $0.717$; while upon using the threshold moving technique, the F1 score becomes $0.69$ and the accuracy becomes $0.741$ respectively.
\end{itemize}
}{}

\subsubsection{MLT performance analysis}
\label{sec:mlt-analysis}
The results(~\ref{tab:cad_mc}, ~\ref{tab:bc_on_cad}) indicate that MLT achieves the best accuracy, surpassing the best individual model, BERTweet ($0.821$ vs. $0.806$ in ~\acrshort{m-c} and $0.86$ vs. $0.85$ in ~\acrshort{b-c}). 
A plausible explanation for this is that CAD is highly imbalanced, and MLT may further enhance the advantage of the majority class, resulting in higher accuracy. However, when considering \textit{macroF1}, which treats all classes equally and places greater emphasis on the minority class, MLT may not necessarily lead to a higher \textit{macroF1} score. Although MLT may not directly boost the \textit{macroF1} score, 
 the higher accuracy achieved through MLT leaves more room for improvement in the \textit{macroF1} score through TM. Consequently, MLT-plus-TM achieves the best performance, surpassing that of Bertweet ($0.467$ vs. $0.441$ in ~\acrshort{m-c} and $0.722$ vs $0.711$ in ~\acrshort{b-c}) in our targeted metric, macro-averaged F1 (\textit{macroF1}).  With MLT-plus-TM, both 
~\acrshort{m-c} and ~\acrshort{b-c} on CAD achieve the best performance in \textit{macroF1}.

The goal of 
MLT is to train optimized combining weights to achieve better performance than that of the individual models on a given dataset.
However, when such weights are applied to a different dataset, such as in a transfer learning scenario, the dynamics of the combination effect may shift. In Table~\ref{tab:bc_on_hatecheck} on dataset~\cite{rottger2020hatecheck}, MLT slightly underperforms when compared with the best individual model, Bertweet ($0.717$ vs. $0.739$ in accuracy and $0.688$ vs. $0.692$ in \textit{macroF1}). This indicates that the combining weights trained on CAD are not optimal for dataset~\cite{rottger2020hatecheck}. 

When we examining individual model performance in Table~\ref{tab:bc_on_cad} and ~\ref{tab:bc_on_hatecheck}, we notice that the inference on CAD test and dataset~\cite{rottger2020hatecheck} does not always follow the same pattern. Although BERTweet (M3) has the best performance and Bloom (M5) is the weakest in both datasets, in Table~\ref{tab:bc_on_cad} on CAD test, BERTweet (M3), Bigbird (M1), BERT (M2) and XLNet (M5) are very close in performance. However, in Table~\ref{tab:bc_on_hatecheck} on dataset~\cite{rottger2020hatecheck}, BertWeet stands out from the rest. Following BERTweet are BERT and XLNet. Bigbird performs significantly worse than the three better models, and Bloom is at the bottom. The performance gap among individual models on dataset~\cite{rottger2020hatecheck} is much wider than on CAD. Bigbird and Bloom show the sharpest performance drop on dataset~\cite{rottger2020hatecheck}. The difference in the pattern of relative model performance between CAD and dataset~\cite{rottger2020hatecheck} shows that the optimal combining weights for CAD may not be optimal on dataset~\cite{rottger2020hatecheck}, which is not surprising in transfer learning. 

In MLT-plus-TM, we use a separate mechanism, TM, to boost final performance in \textit{macroF1} after MLT.

\subsubsection{TM performance analysis}
\label{sec:tm-analysis}

As shown in Figure~\ref{fig:cad_hc_bc_dataset}, the radio between "Non-hateFul" to "Hateful" is $80\,\%$ to $20\,\%$ on CAD.
We train threshold cut-off values to detect more instances of minority class, "Hateful", in order to maximize the overall \textit{macroF1} score. It's important to note that the trained cut-off value is lower than the default 0.5 threshold. This adjustment increases the number of true positives (more instances of "Hateful" instances are correctly classified as such), but it can also decrease the number of true negatives (instances of "Non-hateful" are correctly classified as such). Consequently, the effect of TM effect on accuracy depends on the combined count of true positives and true negatives. If this count increases, accuracy increases; otherwise, accuracy decreases. Considering that $80\,\%$ of data are labeled as "Non-hateful" in the CAD dataset, the decrease in true negatives may be much larger than the increase in true negatives. As a result, the overall accuracy after applying TM may decrease in CAD.

This effect is observed in both ~\acrshort{m-c} and ~\acrshort{b-c} on CAD. In ~\acrshort{m-c} (Table~\ref{tab:mlt_mc}, ~\ref{tab:mlt_tm_mc}), the \textit{macroF1} score for all combiners increases from $0.410$ to $0.458$, but the averaged accuracy decreases from $0.817$ to $0.782$. In ~\acrshort{b-c} (Table~\ref{tab:mlt_bc_on_cad}, ~\ref{tab:mlt_tm_bc_on_cad}), the \textit{macroF1} score increases from $0.706$ to $0.716$, while the averaged accuracy decreases from $0.875$ to $0.843$. This trade-off is acceptable because \textit{macroF1} is a more important metric as in this highly imbalanced data situation.

The effect of applying TM in a dataset with an opposite class distribution compared to the training data is more complex.

In dataset~\cite{rottger2020hatecheck}, when we apply threshold cut-off value trained from CAD on the 'Hateful' class, which is lower than the 0.5 cut-off, it increases the chance of detecting "Hateful" instances more easily. Consequently, more "Hateful" samples are detected, resulting in an increase in the number of true positives for the "Hateful" class. While the number of false negatives for the "Non-hateful" class may also increase, and the number of true negatives decreases, since the majority of the data is "Hateful" ($68\,\%$), the increase in true positives in "Hateful" is likely to be larger than the decrease in true negatives in "Non-hateful". As a result, the overall accuracy is more likely to improve. This is opposite to what happens in CAD and is not unexpected. Figure~\ref{fig:hatecheck_acc_mlt_mlt+tm} illustrates this effect: in all 26 combiners, MLT-plus-TM achieves higher accuracy scores than MLT.

The case of \textit{macroF1} is more delicate. The precision for the "Hateful" class may increase, while the precision for the "Non-hateful" class may decrease. The recall for both classes may also be affected. The \textit{macroF1} score is the average of the F1 scores for both classes, so it will depend on how these changes in precision and recall balance out.   Figure~\ref{fig:hatecheck_f1_mlt_mlt+tm} shows that in all 26 combiners, 21 combiners achieve higher \textit{macroF1} scores after TM. Only 5 (represented by the upward diagonal strips pattern bars)  produce slightly lower \textit{macroF1} scores.

Overall, the threshold cut-off values trained in CAD work very well on dataset~\cite{rottger2020hatecheck},  despite the opposite class composition of the two datasets. From Table~\ref{tab:mlt_bc_on_hatecheck} and ~\ref{tab:mlt_tm_bc_on_hatecheck}, both \textit{macroF1} and accuracy scores are improved after applying TM using the CAD trained threshold values:  average gain in \textit{macroF1} for the 26 combiners is from $0.652$ to $0.672$, and the average gain in accuracy is from $0.673$ to $0.725$. 
The reason why TM works well even when applied
to a dataset with an opposite class composition may be that the models trained on CAD have an implicit bias toward the
majority class, and TM helps to offset this bias. When the offset
brings the predictions closer to the ground truth, it results in performance gains. Compared to MLT, MLT-plus-TM improves both accuracy ($0.741$ vs $0.717$) and \textit{macroF1} ($0.69$ vs $0.688$) in out-of-distribution transfer learning on dataset~\cite{rottger2020hatecheck} (~\ref{tab:bc_on_hatecheck}).

\subsubsection{Reinforcing the Weak Model}
The combiner is not necessarily composed of the best participating models. We try to develop an intuition about how superior and inferior models can combine together and obtain a better metric overall. First, we look at the overall metric results, then, we consider how the weights of the combiner adjust themselves among the superior and the inferior models.

Bloom  (M4) has the weakest performance among the five participating models. When M4 is compared with M1, M2,
M3 or M5, its \textit{macroF1} score and accuracy results are inferior for both the CAD {\fontfamily{qcr}\selectfont test} and dataset~\cite{rottger2020hatecheck}. Though the  results of M4 are inferior, it may still be possible that the hyperspace of M4 might carry some useful information that could support other models in establishing the correct inference. This is where MLT plays a big role, the combinations gives M4 some opportunity to shine in the company of other superior models. Similarly, the case with M1 and M5 when compared to M2 and M3. Further, when M4 is combined with any other model, there is a considerable increment in both the metrics (\textit{macroF1} score and accuracy), compared to M4 alone; while when compared with the individual metrics of the participating models, there is always a small increment. One of the major 
reasons for this observation is the higher weight of the other participating models compared to M4.

If we analyze the weights of the participating models in the combiner, we find that the models that are inferior to their participating peers, tend to have a lower weight. For example, in the case of a combiner with models M1 and M4, the weight of model M1 is $1.246$, while the weight of model M4 is $0.885$. For a three model combiner with models M1, M2 and M4, the weight of model M1 is $0.692$, model M2 is $0.755$, and model M4 is $0.421$, respectively. In the case of a combiner with four models (M1, M2, M3 and M4), we have the weights as $0.566$, $0.615$, $0.697$ and $0.341$ for models M1, M2, M3 and M4, respectively. Similar weight distribution is seen when all the models participate for the combiner, i.e., $0.327$, $0.367$, $0.496$, $0.225$ and $0.419$ for M1, M2, M3, M4 and M5, respectively. Further, from these examples one can see that the weights tend to reflect the individual strength of the models. The inferences and the experimental results are in agreement with \textit{Vassilev et.al.}~\cite{ssnet}, where the authors show that in a combination of superior and inferior model, the inferior model becomes a supporting agent to the combiner, resulting into overall improvements of metrics when compared against the metrics of individual models. Overall, combining more models results in better performance.

\begin{table}[!ht]
    \centering
    \caption{\textbf{B-C Accuracy scores on dataset~\cite{rottger2020hatecheck} 29 functionalities}}
    \resizebox{\textwidth}{!}{%
    \begin{tabular}{|l|l|l|l|l|l|l|l|l|l|l|l|l|l|l|l|}
    \hline
        \textbf{} & \textbf{Functionality} & \textbf{B-D} & \textbf{B-F} & \textbf{Google-P} & \textbf{BIGBIRD} & \textbf{BERT} & \textbf{BERTWEET} & \textbf{BLOOM} & \textbf{XLNET} & \textbf{MLT\_1} & \textbf{MLT-plus-TM\_1} & \textbf{MLT\_2} & \textbf{MLT-plus-TM\_2} & \textbf{MLT\_3} & \textbf{MLT-plus-TM\_3} \\ \hline
        F1 & derog\_neg\_emote\_h & 0.886 & 0.907 & 0.986 & 0.257 & 0.657 & 0.729 & 0.086 & 0.579 & 0.621 & 0.779 & 0.679 & 0.814 & 0.600 & 0.893 \\ \hline
        F2 & derog\_neg\_attrib\_h & 0.886 & 0.843 & 0.957 & 0.436 & 0.864 & 0.921 & 0.107 & 0.764 & 0.871 & 0.964 & 0.907 & 0.964 & 0.850 & 0.993 \\ \hline
        F3 & derog\_dehum\_h & 0.914 & 0.807 & 0.986 & 0.414 & 0.914 & 0.921 & 0.029 & 0.814 & 0.886 & 0.921 & 0.900 & 0.936 & 0.886 & 0.971 \\ \hline
        F4 & derog\_impl\_h & 0.714 & 0.614 & 0.850 & 0.207 & 0.443 & 0.507 & 0.100 & 0.586 & 0.471 & 0.614 & 0.493 & 0.643 & 0.436 & 0.729 \\ \hline
        F5 & threat\_dir\_h & 0.872 & 0.759 & 1.000 & 0.474 & 0.767 & 0.669 & 0.023 & 0.812 & 0.759 & 0.850 & 0.789 & 0.910 & 0.737 & 0.947 \\ \hline
        F6 & threat\_norm\_h & 0.914 & 0.836 & 1.000 & 0.357 & 0.807 & 0.786 & 0.079 & 0.664 & 0.786 & 0.879 & 0.821 & 0.914 & 0.757 & 0.943 \\ \hline
        F7 & slur\_h & 0.604 & 0.410 & 0.660 & 0.722 & 0.861 & 0.938 & 0.000 & 0.771 & 0.924 & 0.958 & 0.931 & 0.972 & 0.910 & 1.000 \\ \hline
        \rowcolor{lightgray} F8 & slur\_homonym\_nh & 0.667 & 0.700 & 0.633 & 0.767 & 0.600 & 0.533 & 0.800 & 0.633 & 0.633 & 0.533 & 0.600 & 0.467 & 0.667 & 0.467 \\ \hline
        \rowcolor{lightgray} F9 & slur\_reclaimed\_nh & 0.395 & 0.333 & 0.284 & 0.383 & 0.358 & 0.506 & 0.667 & 0.099 & 0.383 & 0.259 & 0.370 & 0.272 & 0.370 & 0.160 \\ \hline
        F10 & profanity\_h & 0.829 & 0.729 & 1.000 & 0.593 & 0.914 & 0.886 & 0.229 & 0.821 & 0.871 & 0.921 & 0.879 & 0.929 & 0.857 & 0.964 \\ \hline
        \rowcolor{lightgray} F11 & profanity\_nh & 0.990 & 1.000 & 0.980 & 0.950 & 0.920 & 0.950 & 0.990 & 0.850 & 0.930 & 0.900 & 0.930 & 0.900 & 0.930 & 0.880 \\ \hline
        F12 & ref\_subs\_clause\_h & 0.871 & 0.807 & 0.993 & 0.379 & 0.821 & 0.857 & 0.471 & 0.843 & 0.807 & 0.914 & 0.836 & 0.921 & 0.779 & 0.957 \\ \hline
        F13 & ref\_subs\_sent\_h & 0.857 & 0.707 & 1.000 & 0.474 & 0.857 & 0.782 & 0.534 & 0.722 & 0.797 & 0.880 & 0.812 & 0.872 & 0.789 & 0.925 \\ \hline
        F14 & negate\_pos\_h & 0.850 & 0.607 & 0.964 & 0.286 & 0.686 & 0.621 & 0.057 & 0.593 & 0.571 & 0.771 & 0.621 & 0.757 & 0.564 & 0.836 \\ \hline
        \rowcolor{lightgray} F15 & negate\_neg\_nh & 0.128 & 0.120 & 0.038 & 0.789 & 0.421 & 0.511 & 0.940 & 0.549 & 0.556 & 0.338 & 0.534 & 0.346 & 0.586 & 0.211 \\ \hline
        F16 & phrase\_question\_h & 0.807 & 0.750 & 0.993 & 0.500 & 0.614 & 0.714 & 0.000 & 0.500 & 0.664 & 0.793 & 0.693 & 0.764 & 0.636 & 0.836 \\ \hline
        F17 & phrase\_opinion\_h & 0.857 & 0.759 & 0.985 & 0.466 & 0.827 & 0.805 & 0.045 & 0.669 & 0.759 & 0.887 & 0.797 & 0.910 & 0.737 & 0.940 \\ \hline
        \rowcolor{lightgray} F18 & ident\_neutral\_nh & 0.206 & 0.587 & 0.841 & 0.929 & 0.802 & 0.897 & 1.000 & 0.849 & 0.929 & 0.778 & 0.897 & 0.770 & 0.952 & 0.651 \\ \hline
        \rowcolor{lightgray} F19 & ident\_pos\_nh & 0.217 & 0.529 & 0.540 & 0.937 & 0.651 & 0.608 & 0.963 & 0.455 & 0.704 & 0.534 & 0.683 & 0.492 & 0.725 & 0.360 \\ \hline
        \rowcolor{lightgray} F20 & counter\_quote\_nh & 0.266 & 0.329 & 0.156 & 0.803 & 0.491 & 0.376 & 0.694 & 0.647 & 0.601 & 0.405 & 0.566 & 0.376 & 0.624 & 0.254 \\ \hline
        \rowcolor{lightgray} F21 & counter\_ref\_nh & 0.291 & 0.298 & 0.184 & 0.730 & 0.475 & 0.560 & 0.745 & 0.617 & 0.589 & 0.404 & 0.589 & 0.390 & 0.610 & 0.284 \\ \hline
        \rowcolor{lightgray} F22 & target\_obj\_nh & 0.877 & 0.846 & 0.954 & 1.000 & 1.000 & 1.000 & 1.000 & 1.000 & 1.000 & 1.000 & 1.000 & 1.000 & 1.000 & 1.000 \\ \hline
        \rowcolor{lightgray} F23 & target\_indiv\_nh & 0.277 & 0.554 & 0.846 & 0.846 & 0.600 & 0.646 & 1.000 & 0.646 & 0.692 & 0.662 & 0.662 & 0.615 & 0.692 & 0.569 \\ \hline
        \rowcolor{lightgray} F24 & target\_group\_nh & 0.355 & 0.597 & 0.629 & 0.839 & 0.532 & 0.613 & 0.919 & 0.694 & 0.710 & 0.597 & 0.645 & 0.548 & 0.710 & 0.452 \\ \hline
        F25 & spell\_char\_swap\_h & 0.699 & 0.586 & 0.887 & 0.571 & 0.609 & 0.556 & 0.105 & 0.541 & 0.669 & 0.744 & 0.677 & 0.782 & 0.624 & 0.842 \\ \hline
        F26 & spell\_char\_del\_h & 0.593 & 0.479 & 0.743 & 0.557 & 0.671 & 0.707 & 0.079 & 0.514 & 0.700 & 0.821 & 0.693 & 0.836 & 0.679 & 0.900 \\ \hline
        F27 & spell\_space\_del\_h & 0.681 & 0.511 & 0.801 & 0.383 & 0.674 & 0.830 & 0.071 & 0.652 & 0.738 & 0.872 & 0.759 & 0.872 & 0.723 & 0.929 \\ \hline
        F28 & spell\_space\_add\_h & 0.439 & 0.376 & 0.740 & 0.110 & 0.266 & 0.376 & 0.121 & 0.249 & 0.243 & 0.335 & 0.272 & 0.376 & 0.231 & 0.468 \\ \hline
        F29 & spell\_leet\_h & 0.480 & 0.439 & 0.682 & 0.607 & 0.561 & 0.705 & 0.046 & 0.526 & 0.647 & 0.786 & 0.665 & 0.786 & 0.607 & 0.867 \\ \hline
    \end{tabular}
    }
     \label{tab:bc_29_funcs_hatecheck}
\end{table}

\subsection{Performance in 29 functionalities} \label{sec:ind_fun}
The test suite~\cite{rottger2020hatecheck} provides $29$ functional tests, as shown in Table ~\ref{tab:bc_29_funcs_hatecheck}. We modified original code to include our performance numbers for comparison. In MLT and MLT-plus-TM, we include three runs to provide insights into more general performance.
Each test evaluates a specific functionality and is associated with a gold standard
label ("Hateful" or "Non-hateful"). 
The table also provides performance across functional tests for the models mentioned above. 
We present our results using the same tests in Table~\ref{tab:bc_29_funcs_hatecheck}. The
`Non-hateful' functionality lines are highlighted in gray.
As expected, MLT-plus-TM improves performance in all "Hateful" functionalities compared to MLT alone, while it has the opposite effect on all "Non-hateful" functionalities.
Bloom is strongly biased towards classifying all cases as "Non-hateful", resulting in high accuracy on "Non-hateful" cases but misclassifying most "Hateful" cases.

%%All of our models also suffer obvious weakness in $8$ out of $11$ `Non-hateful' functionalities. 
%%STM fixes TM to a certain degree, but substantial improvement for 'Non-hateful' is still very necessary.

\subsection{Performance on target groups}
We also present our models' performance on target groups, alongside the results of models reported in~\cite{rottger2020hatecheck}, as displayed in Table~\ref{tab:target_acc_hatecheck} using the modified original code.
We adhere to the Centers for Disease Control and Prevention (CDC) guidelines for \hyperlink{https://www.cdc.gov/healthcommunication/Preferred_Terms.html}{Preferred Terms for Select Population Groups \& Communities, available at https://www.cdc.gov/healthcommunication/Preferred\_Terms.html}, when referring to some target group names.
Similar to section ~\ref{sec:ind_fun}, we also include three runs for MLT and MLT-plus-TM.
MLT-plus-TM demonstrates a more balanced accuracy spread across all seven
target groups, thereby reducing performance biases among them and closing the gap with Google-P's performance. % (i.e., women, trans people, gay people, Black people, disabled

\begin{table}[!ht]
    \begin{center}
    \caption{\textbf{Model accuracy on test cases by target group}}
    \resizebox{\textwidth}{!}{%
    \begin{tabular}{|l|l|l|l|l|l|l|l|l|l|l|l|l|l|l|}
    \hline
        \textbf{Target group} & \textbf{B-D} & \textbf{B-F} & \textbf{Google-P} & \textbf{BIGBIRD} & \textbf{BERT} & \textbf{BERTWEET} & \textbf{BLOOM} & \textbf{XLNET} & \textbf{MLT\_1} & \textbf{MLT-plus-TM\_1} & \textbf{MLT\_2} & \textbf{MLT-plus-TM\_2} & \textbf{MLT\_3} & \textbf{MLT-plus-TM\_3} \\ \hline
        Group 1 & 0.349 & 0.523 & 0.805 & 0.634 & 0.715 & 0.710 & 0.287 & 0.713 & 0.767 & 0.781 & 0.755 & 0.770 & 0.748 & 0.796 \\ \hline
        Group 2 & 0.691 & 0.694 & 0.808 & 0.644 & 0.677 & 0.808 & 0.323 & 0.610 & 0.748 & 0.789 & 0.760 & 0.760 & 0.732 & 0.767 \\ \hline
        Group 3 & 0.739 & 0.743 & 0.808 & 0.646 & 0.772 & 0.786 & 0.373 & 0.724 & 0.798 & 0.777 & 0.789 & 0.767 & 0.800 & 0.767 \\ \hline
        Group 4 & 0.698 & 0.722 & 0.805 & 0.458 & 0.720 & 0.689 & 0.337 & 0.613 & 0.694 & 0.755 & 0.734 & 0.774 & 0.672 & 0.791 \\ \hline
        Group 5 & 0.710 & 0.371 & 0.798 & 0.418 & 0.656 & 0.477 & 0.233 & 0.496 & 0.523 & 0.634 & 0.568 & 0.696 & 0.506 & 0.736 \\ \hline
        Group 6 & 0.722 & 0.736 & 0.796 & 0.354 & 0.755 & 0.784 & 0.323 & 0.762 & 0.803 & 0.784 & 0.791 & 0.786 & 0.796 & 0.767 \\ \hline
        Group 7 & 0.705 & 0.589 & 0.805 & 0.447 & 0.515 & 0.698 & 0.235 & 0.553 & 0.591 & 0.686 & 0.637 & 0.713 & 0.589 & 0.739 \\ \hline
    \end{tabular}
    }
    \label{tab:target_acc_hatecheck}
    \end{center}
    \tiny Group 1: Females\newline
    Group 2: Gender Transitioned persons\newline
    Group 3: LGBTQ persons\newline
    Group 4: African American persons\newline
    Group 5: People with disabilities\newline
    Group 6: The Muslim community\newline
    Group 7: Immigrants/Migrants
\end{table}
\section{Conclusion}
\label{sec:conclusion}
We have introduced a meta learning technique (MLT) for OOPS detection that combines individual language models to improve performance. 
Further, we have combined MLT with a threshold-moving (TM) technique to further boost the performance of the combined predictor. The experiments with open models confirm that our proposed methodology is 
numerically stable and is able to produce superior results on HS detection as compared to traditional methods. Thus, our technique helps to close the performance gap between non-open and open models~\cite{liang2022holistic} on the tasks considered in this paper. 
Additionally, we also establish theoretical bounds on the combiner weight coefficients to show that MLT behaves well computationally, 
which is supported by the experimental results. 
Overall, we think that our proposal is suited for a wide range of problems spanning various application domains (e.g., other NLP tasks and multimodal classification).

	%conclusion

\bibliographystyle{unsrt}  
\bibliography{references}  %%% Remove comment to use the external .bib file (using bibtex).

\clearpage

\section{Appendix: a detailed proof of Theorem~\ref{thm1} }\label{appendix}
\begin{proof}
Here we provide a detailed proof of Theorem~\ref{thm1} in the paper. We note that all equation numbers are local for this appendix, unless explicitly mentioned to be those in the paper. With the notation and assumptions from Section~\ref{sec:combiner}  let
\begin{equation}
  \label{eq:interpolationPred}
  \hat{\mathbf{y}}(\tau) = \frac{1}{|W|}\mathbf{y}(\tau)= \sum_{i=1}^K \frac{w_i}{|W|}\mathbf{y}_i(\tau)
  \end{equation}
be the interpolation predictor constructed as a liner combination of $\mathbf{y}_i$ with coefficients that sum up to $\pm 1$. If the individual predictors are good, then the interpolation predictor $\hat{\mathbf{y}}$ is also good, i.e., $||\mathbf{u} - \sigma_b(\hat{\mathbf{y}})||_t$ is small relative to $||\mathbf{u}||_t$.

Let $\mathbb{L}_t(x)$ be a linear approximation of $\sigma(x)$ for some constant $t>0$, such that  $\mathbb{L}_t(x)$ minimizes $||\mathbb{L}_t(x) - \sigma(x)||_t$. Note that any straight line passing through the points $(\ln(\frac{t}{1-t}),\, t)$ and having the same slope as $\sigma^\prime(\ln(\frac{t}{1-t}))$ satisfies  $||\mathbb{L}_t(x) - \sigma(x)||_t=0$. 

First, consider the case $b< 0$ and $w_i\geq 0$ for $\forall i$ in \eqref{eq:biasedsigma}.

Then, $$||\mathbf{u}-\sigma_b(\mathbf{y})||_t = ||\mathbf{u}-W\mathbf{u} + W\mathbf{u} - \sigma_b(\mathbf{y})||_t = $$
$$||(1-W)\mathbf{u} + W(\mathbf{u} - \frac{1}{W}\sigma_b(\mathbf{y}))||_t.$$

Applying the triangle inequality, we get
\begin{equation}
  ||\mathbf{u}-\sigma_b(\mathbf{y})||_t \geq |1-W|\, ||\mathbf{u}||_t - W ||\mathbf{u} - \frac{1}{W}\sigma_b(\mathbf{y}))||_t.
  \label{eq:triangle}
\end{equation}

Next, consider the case $W \leq 1$. Then, from inequality~\eqref{eq:triangle} 

$$||\mathbf{u}-\sigma_b(\mathbf{y})||_t \geq (1-W) ||\mathbf{u}||_t - W ||\mathbf{u} - \frac{1}{W}\sigma_b(\mathbf{y}))||_t.$$
Let $\mathbb{L}_t$ be as defined above and
\begin{equation}
\label{eq:linearshift}
\mathbb{L}_{t,b}(x) = \mathbb{L}_{t}(x -b).
\end{equation}
Then,
$$||\mathbf{u} - \frac{1}{W}\sigma_b(\mathbf{y}))||_t = ||\mathbf{u} - \frac{1}{W}\mathbb{L}_{t,b}(\mathbf{y}) + \frac{1}{W}\mathbb{L}_{t,b}(\mathbf{y}) -\frac{1}{W}\sigma_b(\mathbf{y}))||_t$$

Thus,
$$||\mathbf{u} - \frac{1}{W}\sigma_b(\mathbf{y}))||_t \leq ||\mathbf{u} - \mathbb{L}_{t,b}(\hat{\mathbf{y}})||_t + \frac{1}{W}||\mathbb{L}_{t,b}(\mathbf{y}) -\sigma_b(\mathbf{y}))||_t.$$
Note that $$\frac{1}{W}||\mathbb{L}_{t,b}(\mathbf{y}) -\sigma_b(\mathbf{y}))||_t=0.$$ From here we get

$$||\mathbf{u}-\sigma_b(\mathbf{y})||_t \geq (1-W) ||\mathbf{u}||_t - W ( ||\mathbf{u} - \mathbb{L}_{t,b}(\hat{\mathbf{y}})||_t).$$

This implies that
$$
  W \geq\frac{ ||\mathbf{u}||_t - ||\mathbf{u}-\sigma_b(\mathbf{y})||_t}{||\mathbf{u}||_t + ||\mathbf{u} - \mathbb{L}_{t,b}(\hat{\mathbf{y}})||_t }.
$$

Note that $||\mathbf{u} - \mathbb{L}_{t,b}(\hat{\mathbf{y}})||_t = ||\mathbf{u} - \sigma_b(\hat{\mathbf{y}})||_t$, because by construction $\mathbf{I}_t(\mathbb{L}_{t,b}(\hat{\mathbf{y}})) = \mathbf{I}_t(\sigma_b(\hat{\mathbf{y}}))$. Hence,
\begin{equation}
\label{eq:leftestimate}
  W \geq\frac{ ||\mathbf{u}||_t - ||\mathbf{u}-\sigma_b(\mathbf{y})||_t}{||\mathbf{u}||_t + ||\mathbf{u} - \sigma_b(\hat{\mathbf{y}})||_t }.
\end{equation}

Note also that 
$||\mathbf{u}-\sigma_b(\mathbf{y})||_t$ is  small, especially with respect to the size of $||\mathbf{u}||_t$. Similarly, by the definition of $\hat{\mathbf{y}}$ in \eqref{eq:interpolationPred}, $||\mathbf{u}-\sigma_b(\hat{\mathbf{y}})||_t$ is small.

Next, consider the case $W > 1$. Then, inequality \eqref{eq:triangle} implies 

\begin{equation}||\mathbf{u}-\sigma_b(\mathbf{y})||_t \geq (W-1) ||\mathbf{u}||_t - W ||\mathbf{u} - \frac{1}{W}\sigma_b(\mathbf{y}))||_t.
\label{eq:triang2}
\end{equation}

Introducing  $\mathbb{L}_{t,b}(x)$ as in the previous case, we get

$$||\mathbf{u} - \frac{1}{W}\sigma_b(\mathbf{y}))||_t = ||\mathbf{u} - \frac{1}{W}\mathbb{L}_{t,b}(\mathbf{y}) + \frac{1}{W}\mathbb{L}_{t,b}(\mathbf{y}) -\frac{1}{W}\sigma_b(\mathbf{y}))||_t$$

Thus,
$$||\mathbf{u} - \frac{1}{W}\sigma_b(\mathbf{y}))||_t \leq ||\mathbf{u} - \mathbb{L}_{t,b}(\hat{\mathbf{y}})||_t + \frac{1}{W}||\mathbb{L}_{t,b}(\mathbf{y}) -\sigma_b(\mathbf{y}))||_t.$$
As we observed above, $W^{-1}||\mathbb{L}_{t,b}(\mathbf{y}) -\sigma_b(\mathbf{y}))||_t=0$ and $||\mathbf{u} - \mathbb{L}_{t,b}(\hat{\mathbf{y}})||_t = ||\mathbf{u} - \sigma_b(\hat{\mathbf{y}})||_t$. Hence,
$$||\mathbf{u} - \frac{1}{W}\sigma_b(\mathbf{y}))||_t \leq ||\mathbf{u} - \sigma_b(\hat{\mathbf{y}})||_t.$$

Substituting this into inequality \eqref{eq:triang2} gives

$$||\mathbf{u}-\sigma_b(\mathbf{y})||_t \geq (W-1) ||\mathbf{u}||_t - W ||\mathbf{u} - \sigma_b(\hat{\mathbf{y}})||_t.$$

From here we get

$$||\mathbf{u}-\sigma_b(\mathbf{y})||_t + ||\mathbf{u}||_t \geq W (||\mathbf{u}||_t -  ||\mathbf{u} - \sigma_b(\hat{\mathbf{y}})||_t).$$

 Note that $||\mathbf{u}-\sigma_b(\mathbf{y})||_t$ and $ ||\mathbf{u} - \sigma_b(\hat{\mathbf{y}})||_t$ are small relative to $||\mathbf{u}||_t$, which means that $||\mathbf{u}||_t -  ||\mathbf{u} - \sigma_b(\hat{\mathbf{y}})||_t >0 $ and not too far from $||\mathbf{u}||_t$.  This implies that 
\begin{equation}
  W \leq \frac{ ||\mathbf{u}||_t + ||\mathbf{u}-\sigma_b(\mathbf{y})||_t}{||\mathbf{u}||_t -  ||\mathbf{u} - \sigma_b(\hat{\mathbf{y}})||_t}.
\label{eq:rightestimate}
\end{equation}

Next, consider the case $b> 0$ and $w_i\leq 0$ for  $\forall i$ in \eqref{eq:biasedsigma}. As before,

$$||\mathbf{u}-\sigma_b(\mathbf{y})||_t = ||\mathbf{u}-W\mathbf{u} + W\mathbf{u} - \sigma_b(\mathbf{y})||_t = $$
$$||(1-W)\mathbf{u} + W(\mathbf{u} - \frac{1}{W}\sigma_b(\mathbf{y}))||_t,$$
and from this we get
\begin{align}
  ||\mathbf{u}-\sigma_b(\mathbf{y})||_t &\geq |1+W|\, ||\mathbf{u}||_t - |W| ||\mathbf{u} + \frac{1}{W}\sigma_b(\mathbf{y})||_t\\
  &= |1+W|\, ||\mathbf{u}||_t -|W| ||\mathbf{u} + \frac{1}{W} \mathbb{L}_{t,b}(\mathbf{y})||_t\\
  &= |1+W|\, ||\mathbf{u}||_t + W ||\mathbf{u} - \frac{1}{|W|} \mathbb{L}_{t,b}(\mathbf{y})||_t\\
  &= |1+W|\, ||\mathbf{u}||_t + W ||\mathbf{u} - \mathbb{L}_{t,b}(\mathbf{\hat y})||_t.
  \label{eq:triangle2}
\end{align}
Furthermore, consider the case $1 + W>0$. Then from \eqref{eq:triangle2} we get
$$
||\mathbf{u}-\sigma_b(\mathbf{y})||_t \geq (1+W)||\mathbf{u}||_t + W ||\mathbf{u} - \mathbb{L}_{t,b}(\mathbf{\hat y})||_t.
$$
Thus, using  $||\mathbf{u} - \mathbb{L}_{t,b}(\hat{\mathbf{y}})||_t = ||\mathbf{u} - \sigma_b(\hat{\mathbf{y}})||_t$, 

\begin{equation}
W\leq -\frac{||\mathbf{u}||_t - ||\mathbf{u}-\sigma_b(\mathbf{y})||_t}{||\mathbf{u}||_t+||\mathbf{u} - \sigma_{b}(\mathbf{\hat y})||_t}
\label{eq:leftnegativeW}
\end{equation}

Now, let $W <-1$. Then,

$$
||\mathbf{u}-\sigma_b(\mathbf{y})||_t \geq -(1+W)||\mathbf{u}||_t + W ||\mathbf{u} - \mathbb{L}_{t,b}(\mathbf{\hat y})||_t.
$$
Form here  using again  $||\mathbf{u} - \mathbb{L}_{t,b}(\hat{\mathbf{y}})||_t = ||\mathbf{u} - \sigma_b(\hat{\mathbf{y}})||_t$, we get
\begin{equation}
W\geq -\frac{||\mathbf{u}||_t + ||\mathbf{u}-\sigma_b(\mathbf{y})||_t}{||\mathbf{u}||_t-||\mathbf{u} - \sigma_{b}(\mathbf{\hat y})||_t}
\label{eq:righttnegativeW}
\end{equation}

Combining  \eqref{eq:leftestimate}, \eqref{eq:rightestimate}, \eqref{eq:leftnegativeW}, and \eqref{eq:righttnegativeW} completes the proof. 
\end{proof}

% \clearpage
% \section{Appendix: a single combiner method}\label{appendixB}

%   In Section~\ref{sec:combiner} we introduced a binary decomposition method that uses $N$ combiners, one for each class of data. Note that one may employ a different strategy to develop a single combiner method by taking the output from three individual models each
%   with $N$ columns corresponding to probabilities in $N$ categories to
%   combiner as input. The output from this combiner would be $N$ columns
%   representing the combined probabilities for $N$ categories. However, this method is more complex and non-linear, which makes it hard to analyze and apply TM appropriately. Computationally, we found it delivers performance not much different than the $N$-separate combiner, which makes the TM implementation for multilabel classification much easier without performance sacrifices.

	%appendix

\end{document}